\DeclareMathOperator*{\argmax}{\arg\!\max}
\newtheorem{theorem}{Theorem}
\newtheorem{lemma}{Lemma}
\theoremstyle{remark}
\theoremstyle{definition}
\newtheorem{definition}{Definition}
\newtheorem{assumption}{Assumption}
\renewcommand{\arraystretch}{1.2}
\begin{document}
%
% paper title
% Titles are generally capitalized except for words such as a, an, and, as,
% at, but, by, for, in, nor, of, on, or, the, to and up, which are usually
% not capitalized unless they are the first or last word of the title.
% Linebreaks \\ can be used within to get better formatting as desired.
% Do not put math or special symbols in the title.

% \title{Which2Comm: Scheduling Collaborators with Second-Order Topology Approximation for Collaborative Perception }
\title{C-MASS: Combinatorial Mobility-Aware Sensor Scheduling for Collaborative Perception with Second-Order Topology Approximation}

% GNN or RMAB-based / (Who and How much to comm? from historical information)

% baseline: where2comm - 
%
% author names and IEEE memberships
% note positions of commas and nonbreaking spaces ( ~ ) LaTeX will not break
% a structure at a ~ so this keeps an author's name from being broken across
% two lines.
% use \thanks{} to gain access to the first footnote area
% a separate \thanks must be used for each paragraph as LaTeX2e's \thanks
% was not built to handle multiple paragraphs
%

\author{Yukuan~Jia,
        Yuxuan~Sun,~\IEEEmembership{Member,~IEEE,}
        Ruiqing Mao,
        Zhaojun~Nan,~\IEEEmembership{Member,~IEEE,} \\
        Sheng~Zhou,~\IEEEmembership{Senior Member,~IEEE,}
        and~Zhisheng~Niu,~\IEEEmembership{Fellow,~IEEE}% <-this % stops a space
        
\thanks{Yukuan Jia, Ruiqing Mao, Zhaojun Nan, Sheng Zhou, and Zhisheng~Niu are with Department of Electronic Engineering, Tsinghua University, China. Emails: \{jyk20, mrq20\}@mails.tsinghua.edu.cn, \{nzj660224, sheng.zhou, niuzhs\}@tsinghua.edu.cn.}%
\thanks{Yuxuan~Sun is with School of Electronic and Information Engineering, Beijing Jiaotong University, China. Email: yxsun@bjtu.edu.cn}% <-this % stops a space
% \thanks{This work is sponsored in part by the Nature Science Foundation of China (No. 62341108, No. 62301024), by Beijing Natural Science Foundation (No. L222044), and by the project of Tsinghua University-Toyota Joint Research Center for AI Technology of Automated Vehicle (No. TTAD2024-08). (Corresponding Author: Sheng~Zhou.)}%

% \thanks{Part of this work has been published in IEEE ICC 2022 \cite{icc}.}% <-this % stops a space
}
\maketitle

% As a general rule, do not put math, special symbols or citations
% in the abstract or keywords.
\begin{abstract}
Collaborative Perception~(CP) has been a promising solution to address occlusions in the traffic environment by sharing sensor data among collaborative vehicles~(CoV) via vehicle-to-everything~(V2X) network.
With limited wireless bandwidth, CP necessitates task-oriented and receiver-aware sensor scheduling to prioritize important and complementary sensor data.
However, due to vehicular mobility, it is challenging and costly to obtain the up-to-date perception topology, i.e., whether a combination of CoVs can jointly detect an object.
In this paper, we propose a combinatorial mobility-aware sensor scheduling (C-MASS) framework for CP with minimal communication overhead. 
Specifically, detections are replayed with sensor data from individual CoVs and pairs of CoVs to maintain an empirical perception topology up to the second order, which approximately represents the complete perception topology.
A hybrid greedy algorithm is then proposed to solve a variant of the budgeted maximum coverage problem with a worst-case performance guarantee.
The C-MASS scheduling algorithm adapts the greedy algorithm by incorporating the topological uncertainty and the unexplored time of CoVs to balance exploration and exploitation, addressing the mobility challenge. 
Extensive numerical experiments demonstrate the near-optimality of the proposed C-MASS framework in both edge-assisted and distributed CP configurations. 
The weighted recall improvements over object-level CP are 5.8\% and 4.2\%, respectively.
Compared to distance-based and area-based greedy heuristics, the gaps to the offline optimal solutions are reduced by up to 75\% and 71\%, respectively.
\end{abstract}

% Note that keywords are not normally used for peerreview papers.
\begin{IEEEkeywords}
collaborative perception, V2X network, sensor scheduling, maximum coverage, submodularity.
\end{IEEEkeywords}

% For peer review papers, you can put extra information on the cover
% page as needed:
% \ifCLASSOPTIONpeerreview
% \begin{center} \bfseries EDICS Category: 3-BBND \end{center}
% \fi
%
% For peerreview papers, this IEEEtran command inserts a page break and
% creates the second title. It will be ignored for other modes.

\IEEEpeerreviewmaketitle

\section{Introduction}
Environmental perception, the fundamental component in automated driving, has rapid advancements in recent years~\cite{multi-modal}. 
However, onboard sensors, e.g., LiDAR and camera, provide visual information limited to its line of sight~(LoS), inherently unable to manage occlusions in complex traffic environments.
Collaborative Perception~(CP)~\cite{CP-survey-MITS, CP-survey-MNET1, CP-survey-MVT, CP-survey-MNET2} addresses this challenge by enabling connected automated vehicles~(CAV) and roadside units~(RSU) to exchange perception data via vehicle-to-everything~(V2X) communications~\cite{NR-V2X}. 
In the CP procedure, the sensor data from diverse perspectives are fused to achieve holistic environmental awareness, thus eliminating blind zones and discovering occluded and hard objects.

Depending on the availability of RSU, CP can operate in two configurations, as illustrated in Fig.~\ref{fig:illus}.
The first is edge-assisted CP~\cite{edge-sec, edgecooper, avp, lehan, hybrid-level}, where an RSU serves as the fusion center to observe a critical region, such as an intersection.
It aggregates sensor data from nearby collaborative vehicles (CoV) by vehicle-to-infrastructure~(V2I) communications, executes object detection, and broadcasts the results to vehicles via infrastructure-to-vehicle~(I2V) communications.
Since it is costly to deploy RSU, distributed CP~\cite{cpm, autocast, v2vnet, where2comm, xuemin} is the second choice outside coverage.
CoVs can exchange sensor data through direct vehicle-to-vehicle~(V2V) communications.
The current standard of CP is the collective perception message~(CPM)~\cite{cpm} that broadcasts a list of detected objects.
This object-level CP is prone to error when there is inevitably time asynchrony or inaccurate localization~\cite{v2vnet}.
Therefore, recent work considers transmitting LiDAR point clouds~\cite{autocast} or neural network-extracted features~\cite{v2vnet, where2comm}, exploiting and fusing richer information to improve the perception quality. 

\begin{figure}[t]
	\centering
	\subfloat[]{\includegraphics[width=0.71\linewidth]{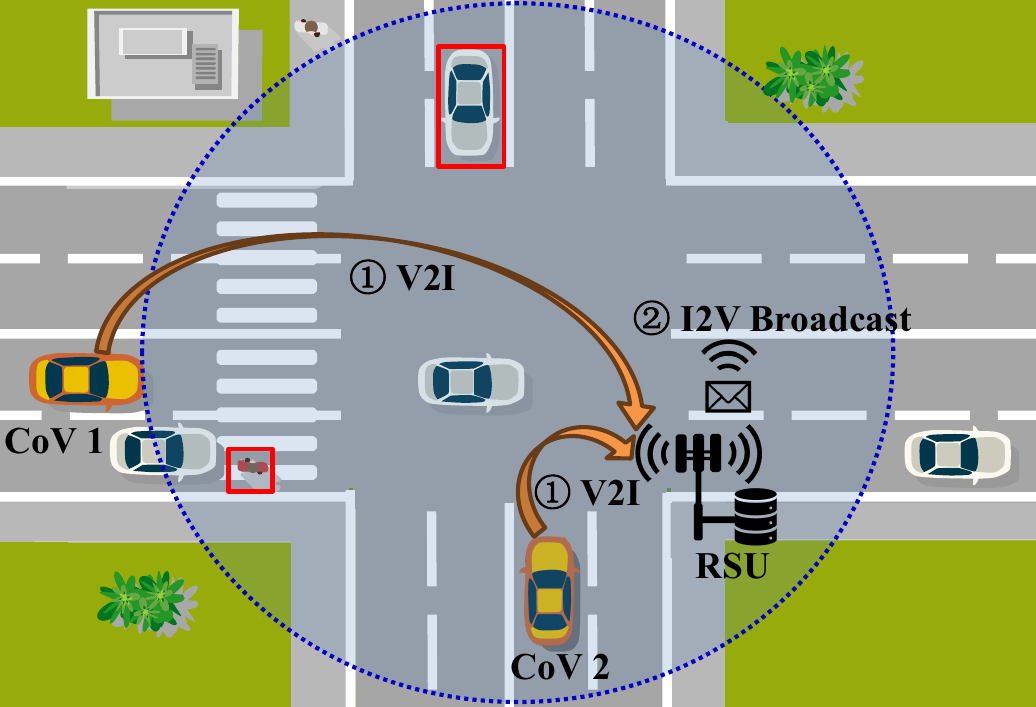}
		\label{subfig:illus-edge}}
	\hfill
	\subfloat[]{\includegraphics[width=0.71\linewidth]{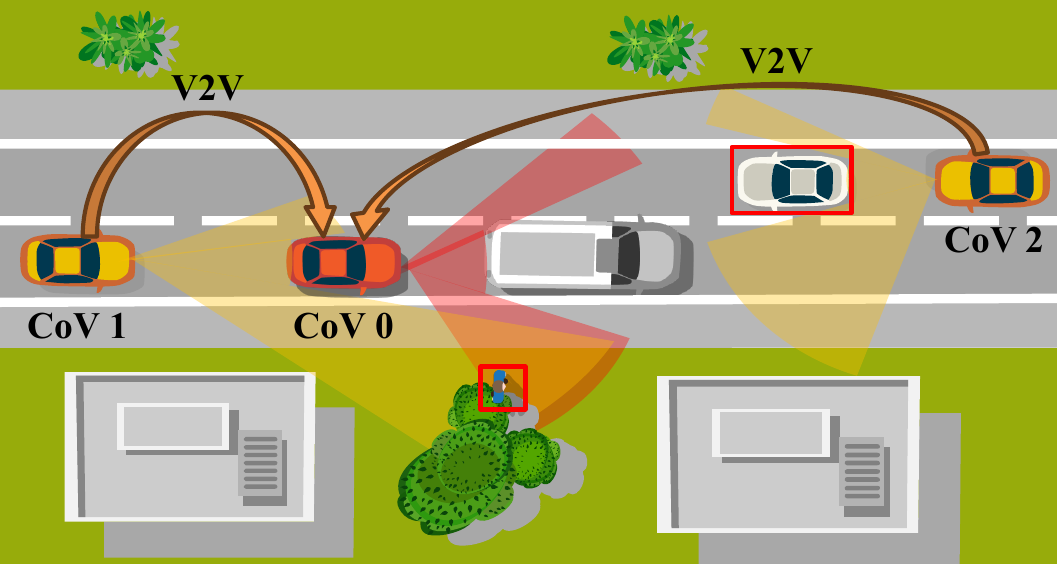}
		\label{subfig:illus-distr}}
	\caption{Illustrations of the two configurations of CP. (a) Edge-assisted CP with V2I/I2V communication. (b) Distributed CP with V2V communication.}
	\label{fig:illus}
\end{figure}

The allocated bandwidth of V2X communications is 20MHz in China and 30MHz in the US~\cite{spectrum}, which can hardly support real-time streaming of point clouds or features, not to mention coexistence with other V2X applications.
For bandwidth efficiency, data transmissions in CP need to be scheduled in a \emph{task-oriented} ~\cite{ours} and \emph{receiver-aware} way to prioritize critical and complementary sensor data over irrelevant or redundant data.
Given the set of CoVs available for CP, it is thus important to determine \emph{the subset of CoVs to communicate with} to optimize the utility of CP under a certain bandwidth constraint.

The unique challenge in effectively scheduling a CP system lies in the mobility of traffic participants such as vehicles and pedestrians~\cite{meet}.
Therefore, the \emph{perception topology}, defined as whether a combination of CoVs can jointly detect an object, is unknown and time-varying.
In fact, it is very difficult to predict the perception topology accurately, due to different sensor qualities, uncertain occlusion relationships, and the black-box nature of neural networks.
% An interested object may not be correctly detected even inside sensor coverage.
Besides, the number of combinations is exponential to the number of available CoVs. 
Existing CP frameworks rely on exchanging metadata messages, such as future trajectories~\cite{autocast}, object information~\cite{selective-comm}, or spatial confidence maps~\cite{where2comm}, to predict the perception topology or collaboration utility of CoVs individually.
This communication overhead occupies extra bandwidth and introduces latency, which impairs the overall performance~\cite{syncnet}. 
Additionally, due to the channel fading and the movement of CoVs, the wireless channel conditions are also time-varying, necessitating \emph{real-time} scheduling decisions and resource allocation.

In this paper, motivated by these challenges, we endeavor to introduce a combinatorial structure to the perception topology in feature-level CP. 
Supported by experimental observations, we assume that the perception topology of a set of CoVs can be approximately derived from the perception topologies of all individual CoVs and pairs of CoVs within the set, which is a \emph{second-order approximation}.
It is then feasible to calculate the utility of scheduling any CoV set with manageable complexity. 
Based on this assumption, we propose a combinatorial, mobility-aware sensor scheduling (C-MASS) framework for CP.
Specifically, the user, such as the RSU in centralized CP or a certain CoV in distributed CP, proactively requests and receives sensor data from a subset of nearby CoVs.
After data fusion and detection, the user pinpoints whether and how each CoV contributes to the detection of an object by replaying the detections.
Consequently, the user can efficiently schedule CoVs while balancing the trade-off between exploration and exploitation to maintain up-to-date knowledge of the perception topology.

\subsection{Related Work} 
\subsubsection{Collaborative Perception}
Depending on the type of transmitted data, there are three levels of CP~\cite{CP-survey-MITS}.
The current standard of object-level CP transmits lightweight CPMs containing labels and positions of detected object.
Analogous to hard decision, object-level CP cannot integrate the underlying rich information from multiple sensor data, and it has difficulty fusing discrepant results from different sources~\cite{association}.  
On the contrast, transmitting raw images or point clouds as in raw-level CP results in unnecessarily high data volume.

Feature-level CP strikes a balance by transmitting neural network-extracted features.
Many state-of-the-art CP models utilize bird-eye-view~(BEV) features~\cite{where2comm, cobevt}, which has a unified representation space for different perspectives and modalities, facilitating data fusion and compression.
For example, PointPillars~\cite{pp} encodes the LiDAR point clouds in each 2D BEV grid separately. 
CaDDN~\cite{caddn} first encodes images to the 3D space by depth prediction and then collapse to the 2D BEV space.
Fusing BEV features enjoys several benefits.
First, the fusion network can softly aggregate the local information across CoVs even with spatial misalignment and temporal asynchrony~\cite{coalign}.
Second, as reported in ref.~\cite{hybrid-level}, there are hard objects undetectable in any individual perspective but recognized when the information are fused. 
Last but not least, there are handy feature compression techniques such as background removal~\cite{autocast} and autoencoder~\cite{cobevt} to reduce the data size.

\subsubsection{Sensor Scheduling}
A body of literature focuses on scheduling a subset of sensors to maximize object coverage ~\cite{autocast, random-greedy} or area coverage~\cite{timely-coverage, david, rao} under communication constraint.
These NP-Hard combinatorial problems are usually accompanied with a nice submodular property that assumes diminishing returns, and can be effectively solved by greedy algorithms~\cite{approx-alg}. 
Spatial reasoning~\cite{autocast} and occupancy flow prediction~\cite{rao} methods based on ray tracing are used to predict LoS relationships.
However, modeling perception based on the LoS condition leads to an oversimplification that is too idealized for modern perception systems with imperfect factors such as sensor noise and poor lighting conditions.

The multi-agent learning paradigm is widely applied in collaborator scheduling in existing CP architectures~\cite{icc,offload,replication}.
When2Com~\cite{when2com} adopts a three-stage handshake mechanism, where query vectors are exchanged in prior to calculate matching scores between CoVs, which are then used to prune unnecessary collaborations.  
Furthermore, Where2Comm~\cite{where2comm} proposes transmitting spatial confidence maps alongside sensor data in multi-round collaboration, enabling sparsification compression of BEV features.
Select2Col~\cite{select2col} utilizes a graph neural network~(GNN) for collaborator scheduling.
In addition to introducing communication overhead, these methods rely on a global preset threshold or an implicit selection network, which fails to account for the time-varying and heterogeneous bandwidth costs caused by volatile wireless links.

\subsection{Contributions}
Our main contributions are summarized as follows:
\begin{enumerate}
    \item We present an analytical model of feature-level CP with a second-order approximation to the perception topology, supported by experimental observations. 
    Additionally, we fit a statistical detection model to CP datasets, facilitating large-scale evaluation of CP scheduling algorithms.
    \item We propose C-MASS, a pull-based, object-oriented CP framework that replays object detections locally to maintain and predict an empirical second-order perception topology, facilitating efficient scheduling of CP with minimal communication overhead.
    \item We propose a hybrid greedy algorithm to solve a variant of the budgeted maximum coverage problem that considers joint detections. The algorithm has low complexity and offers a theoretical performance guarantee. Furthermore, to balance exploration and exploitation, the C-MASS scheduling algorithm adapts the greedy approach by incorporating topological uncertainty and the unexplored time of CoVs.
    \item We conduct extensive numerical experiments in both edge-assisted and distributed CP configurations. The results demonstrate the near-optimal performance of the C-MASS framework and scheduling algorithm, outperforming distance-based and area-based scheduling algorithms.
\end{enumerate}

The remainder of this paper is organized as follows. 
Empirical studies on feature-level CP are conducted in Section~II.
Section III introduces the core modules of the C-MASS framework, and Section IV formulates the scheduling problem. 
Section V describes the proposed algorithm and conducts complexity and performance analyses.
Simulation results are provided in Section VI.
Finally, we conclude this paper with a summary in Section VII.

\section{Empirical Study on Feature-Level CP}
Towards characterizing feature-level CP with an analytical model, we conduct an empirical study on two popular open CP datasets in this section. 
The aim is to discover main properties of feature-level CP and develop an analytical model beyond the LoS rule that captures those properties. 
We focus on the LiDAR-based 3D object detection in this preliminary study.

\subsection{Datasets and Benchmarks}
We conduct experiments on two representative CP datasets, OPV2V~\cite{opv2v} and V2VReal~\cite{v2v4real}.
OPV2V is the first large-scale \emph{simulated} CP dataset for V2V cooperation.
It has over 70 scenarios collected from 8 towns.
In each scenario, there are 2 to 7 CoVs, which are mounted with omni-directional LiDARs to produce realistic point clouds.
V2V4Real is the first large-scale \emph{real-world} CP dataset for V2V cooperation.
Two experimental vehicles are used to collect the sensor data over distance of 410 kilometers.
Each vehicle is equipped with a Velodyne VLP-32 LiDAR and a GPS/IMU system.
% , V2V4Real contains more than 20K frames and more than 240K annotated ground-truth 3D bounding boxes.

The fundamental metrics of 3D object detection are recall and precision.
Recall is defined as the fraction of detected objects among all interested objects, while precision is defined as the fraction of true detections among all detections.
Precision and recall form a trade-off curve when the score threshold for positive detection varies. 
To reflect the overall capability of a detector, Average Precision~(AP) is defined as the integral of precision over recall.
As reported by the benchmarks of OPV2V and V2V4Real datasets~\cite{opv2v, v2v4real}, a state-of-the-art feature-level CP model CoBEVT~\cite{cobevt} achieves the best performance.
CoBEVT takes the encoded BEV features from multiple vehicles and fuses them using the transformer architecture.
Therefore, we adopt CoBEVT as the representative feature-level CP model for the following studies.

\begin{table}[ht]
\centering
\caption{3D object detection results on CP datasets}
\label{table:benchmark}
\renewcommand{\arraystretch}{1.3}
\resizebox{\columnwidth}{!}{
\begin{tabular}{lcccc}
\hline
\multirow{2}{*}{Method} & \multicolumn{2}{c}{OPV2V\cite{opv2v}}                  & \multicolumn{2}{c}{V2V4Real\cite{v2v4real}}            \\ \cline{2-5} 
                        & \multirow{2}{*}{\ \ AP (\%) \ }  & Recall (\%)  & \multirow{2}{*}{\ \ AP (\%) \ } & Recall (\%) \\
                        &                                  & @ Prec. 90\%    &                                  & @ Prec. 90\%    \\ \hline
Standalone              & 85.3                             & 84.1          & 59.5                             & 48.9          \\ \hline
Object-level CP         & 91.8                             & 91.3          & 79.4                             & 65.2          \\ \hline
CoBEVT \cite{cobevt}    & \textbf{96.9}                             & \textbf{96.5}          & \textbf{83.9}                             & \textbf{72.5}          \\ \hline
\end{tabular}}
\end{table}

In our reproduced experiment, the benchmark results on the test sets are shown in Table~\ref{table:benchmark}.
It is expected that CP, even at object level, significantly increases the AP by up to 20\%.
A notable improvement, about 5\% in AP, can be further achieved with the feature-level CP model CoBEVT.
% The advantage of feature-level CP is two-fold.
By jointly considering the perceptual information from multiple perspectives, feature-level CP can not only discover more missed objects but also eliminate false positives.
We focus on a specific operating point where the required precision is 90\%, which means false positives should rarely happen. 
It can be observed that CoBEVT achieves 5.2\% and 7.3\% higher recall than the object-level CP in OPV2V and V2V4Real respectively, which are considerable improvements in 3D object detection that necessitate the feature-level CP.
We pick a frame from V2V4Real as an illustrative example.
As shown in Fig.~\ref{fig:lidar}, there are three objects that are only detectable with both sources of data using feature-level CP.
In addition, the positional accuracy of detections are also improved.
% the interested range is $\text{[-100m, +100m]}$ in longitudinal direction and $\text{[-40m, +40m]}$ in horizontal direction with respect to the ego vehicle.
% Note that the original reported result has a flaw with no global sorting of the confidence score. 
% The Intersection-over-Union~(IoU) threshold is 0.5, which is the matching requirement for detection boxes with ground-truth boxes.

\begin{figure}[t]
	\centering
	\subfloat[]{\includegraphics[width=0.65\linewidth]{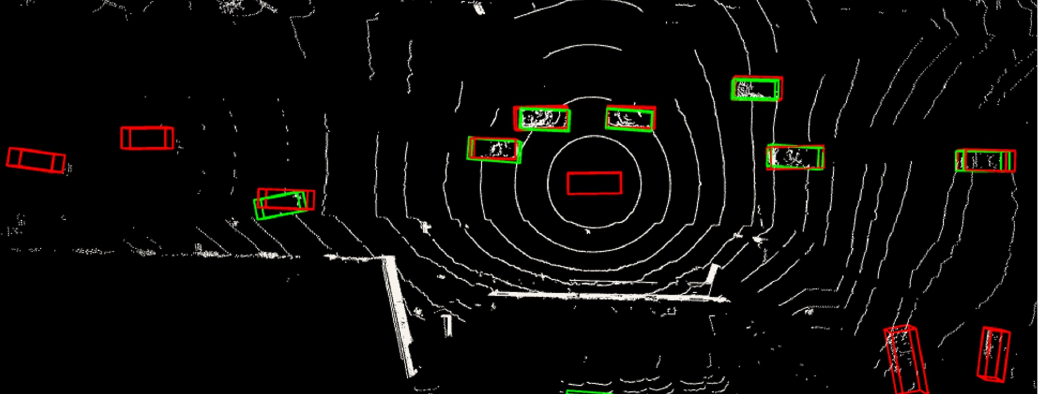}
		\label{subfig:lidar-ego}}
	\hfill
	\subfloat[]{\includegraphics[width=0.65\linewidth]{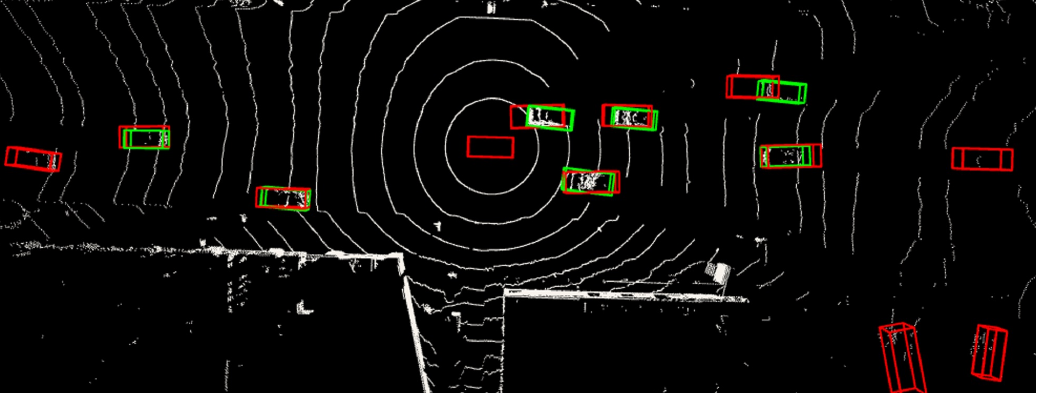}
		\label{subfig:lidar-aux}}
        \hfill
	\subfloat[]{\includegraphics[width=0.65\linewidth]{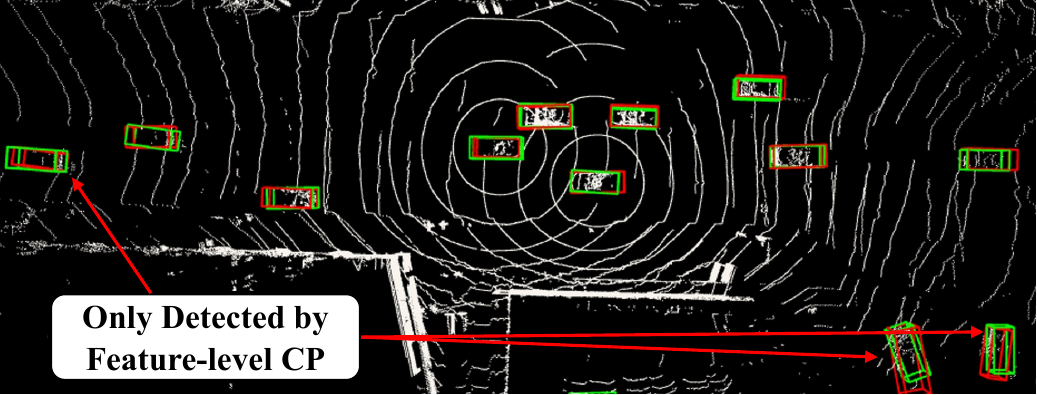}
		\label{subfig:lidar-merged}}        
	\caption{The LiDAR point clouds and detection results of a frame in V2V4Real. (a) The ego vehicle only. (b) The auxiliary vehicle only. (c) Feature-level CP. Red boxes: ground truth objects. Green boxes: objects detected by CoBEVT.}
	\label{fig:lidar}
\end{figure}

\subsection{Statistical Detection Model for Feature-level CP} \label{sect:stat-model}
A preliminary statistical study of 3D object detection on DOLPHINS dataset~\cite{dolphins} is conducted in our previous work~\cite{mass}.
Given a detection network $\Phi(\cdot)$, the conditional missed detection probability conforms closely to a negative power function of the number of scanned points $N(\mathcal{X}_n)$, i.e.,
\begin{align}
    \mathrm{Pr}(\Phi(\mathcal{X}_n)=0 | \Phi, N(\mathcal{X}_n)) &= (N(\mathcal{X}_n))^{-\lambda},
    % \mathrm{e}^{-\lambda\log{N(\mathcal{X}_n)}},
\end{align}
where $\Phi(\mathcal{X}_n)\in\{0,1\}$ denotes whether object $n$ is detected, and $\lambda$ is a scale parameter related to the capability of the detector $\Phi$ and the overall difficulty of the dataset. 
Define \emph{object difficulty} $D_{n}$ as the logarithm of the required number of scanned points for detection of object $n$ with detector $\Phi$.
% $ = \log{N(\mathcal{X}_n)}$, which is
Then, the statistics implies that the object difficulty follows an exponential distribution, i.e., $D_n \sim \mathrm{Exp}(\lambda)$. 
The difficulty of an object incorporates various influencing factors such as the unusual shape and the interactions with the background environment.
On the other hand, $\log{N(\mathcal{X}_n)}$ reflects the \emph{information amount} extracted from the sensing data $\mathcal{X}_n$.

However, there is currently no statistical detection model for feature-level CP, because it is challenging to create a simple and unified model with a variable number of collaborators.
In scenarios with at least three CoVs in OPV2V, we observe that only a negligible 0.06\% portion (32 out of 57,973) of true detections cannot be detected with any pair of two CoVs.
In other words, \emph{the vast majority of true detections originate from the sensing data of individual CoVs or the fused information of two certain CoVs}.
Therefore, we propose to approximate the CP result with multiple CoVs by the union of detected objects from all the CoV subsets containing no more than two CoVs, which is a \emph{second-order approximation}.
Note that the first-order approximation reduces to the object-level CP, which demonstrates a clear recall gap to the feature-level CP.

\begin{figure}[t]
	\centering
	\subfloat[]{\includegraphics[width=0.24\textwidth]{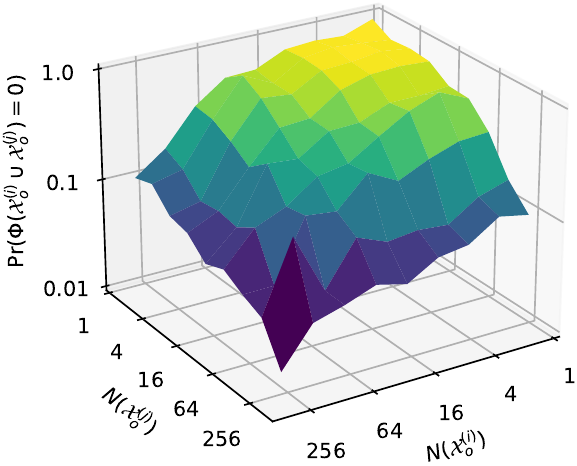}\label{subfig:v2v4real-stat}}
	\subfloat[]{\includegraphics[width=0.24\textwidth]{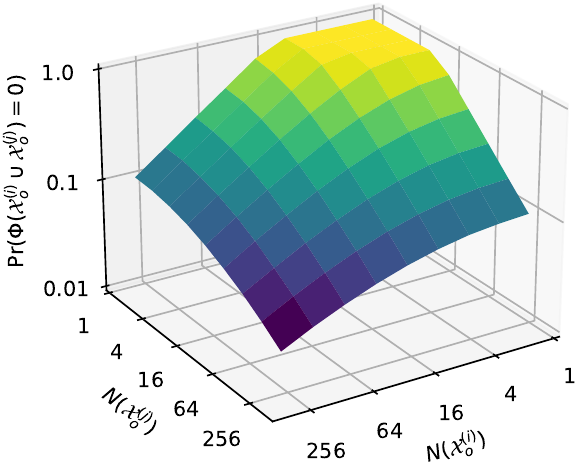}\label{subfig:v2v4real-stat-fit}}
	\caption{3D surface graphs of conditional missed detection probabilities given the number of scanned points at both perspectives. (a) Statistics from V2V4Real. (b) Fitted model for V2V4Real.}
	\label{fig:stat-fit}
\end{figure}

% \begin{figure*}[ht]
% \centering
% \subfloat[]{\includegraphics[width=0.4\textwidth]{}\label{subfig:opv2v-stat}}
% \hfil
% \subfloat[]{\includegraphics[width=0.4\textwidth]{figs/v2v4real_pairstat.pdf}\label{subfig:v2v4real-stat}}
% \newline
% % \medskip % create some *vertical* separation between the graphs
% \subfloat[]{\includegraphics[width=0.4\textwidth]{}\label{subfig:opv2v-stat-fit}}
% \hfil
% \subfloat[]{\includegraphics[width=0.4\textwidth]{figs/v2v4real_pairstat_fit.pdf}\label{subfig:v2v4real-stat-fit}}
% \caption{3D surface graphs of conditional missed detection probabilities given the number of scanned points at both perspectives. (a) Statistics from OPV2V. The grids with too few missed detection are omitted for stability. (b) Statistics from V2V4Real. (c) Fitted model for OPV2V. (d) Fitted model for V2V4Real.}
% \label{fig:stat-fit}
% \end{figure*}

Furthermore, we perform statistics on the detection results across all pairs of CoVs.
Fig.~\ref{subfig:v2v4real-stat} presents the 3D surface graphs of the conditional missed detection probability using the real-world V2V4Real dataset, with similar results observed in the OPV2V dataset.
% For clarity, the graph uses logarithmic scales on horizontal and vertical axes.
The figure shows that the missed detection probability decreases as the number of scanned points increases.
Notably, additional points from an auxiliary view are often more beneficial than those from the original view.
Inspired by this observation, we propose an empirical detection model based on the norm of the information amount from participating CoVs $\mathcal{V}$, i.e.,
\begin{align}
\quad \ &\left. \mathrm{Pr}\left(\Phi\left(\bigcup_{i\in\mathcal{V}}\mathcal{X}_{i,n}\right)=0\ \right| \Phi, \left\{N\left(\mathcal{X}_{i,n}\right)\right\}_{i\in\mathcal{V}} \right) \notag \\
= &\left. \mathrm{Pr}\left(\left|\left|\left\{\log{N\left(\mathcal{X}_{i,n}\right)}\right\}_{i\in\mathcal{V}}\right|\right|_p < D_n \right| \left\{N(\mathcal{X}_{i,n})\right\}_{i\in\mathcal{V}} \right),
\end{align}
where $\mathcal{X}_{i,n}$ denotes the LiDAR point clouds from CoV $i$ on object $n$, and $p$ is the norm order.
The object difficulty $D_n$ follows a shifted exponential distribution, with probability distribution function
\begin{align} \label{eq:shift-exp}
    f(d;\lambda,\mu) = 
    \begin{cases}
        \lambda \mathrm{e}^{-\lambda(d-\mu)}, &d \ge \mu, \\
        0, &d < \mu,
    \end{cases}
\end{align}
where $\lambda$ and $\mu$ are the scale and bias of the shifted exponential distribution. 

\begin{table}[ht]
\centering
\caption{Model Parameters Fitted on CP Datasets}
\label{table:fitparam}
% \resizebox{\columnwidth}{!}{%
\begin{tabular}{llll}
\hline
Dataset & Norm order $p$ & Diffic. scale $\lambda$ & Diffic. bias $\mu$ \\ \hline
V2V4Real~\cite{v2v4real}         & \multicolumn{1}{c}{2.3}            &   \multicolumn{1}{c}{2.1}                      &   \multicolumn{1}{c}{3.9}             \\ \hline
OPV2V~\cite{opv2v}            & \multicolumn{1}{c}{1.4}           &     \multicolumn{1}{c}{1.6}                    &  \multicolumn{1}{c}{0.9}            \\ \hline
\end{tabular}%
% }
\end{table}

We fit the model parameters $p$, $\lambda$, and $\mu$ to the statistics on V2V4Real and OPV2V datasets.
Monte Carlo simulation is used to calculate the value of the fitted models for each grid point.
We adopt L1 loss on the logarithm of missed detection probability, because L1 loss is robust in the presence of noise in the data.
The optimal sets of model parameters is found by the brute-force search, as shown in Table~\ref{table:fitparam}.
The fitted missed detection probability of the V2V4Real is shown in Fig.~\ref{subfig:v2v4real-stat-fit}.
It can be seen that the fitted model approximates the overall trend of the missed detection probabilities.
The real-world dataset V2V4Real is more difficult than the simulated dataset OPV2V, because the sensing data is more noisy.
The norm order $p$ reflects the fusion efficiency of a CP model on a given dataset.
As a special case, $p=\infty$ corresponds to the object-level CP, which performs hard decisions.
% similar results can be obtained on OPV2V, which is ommited due to space limit

In practice, the difficulty of an object at a certain position is inherent and unknown to the user. 
Therefore, it is impractical to directly regard the object difficulties as constraints for optimization.
In this work, the second-order approximation serves as a fundamental assumption of the system model.
The parameterized model is only used to efficiently conduct large-scale numerical experiments to evaluate the sensor scheduling algorithms in Sect.~\ref{sect:exp}.

\section{The C-MASS Framework} 

\begin{figure*}[ht]
    \centering
    \includegraphics[width=0.65\textwidth]{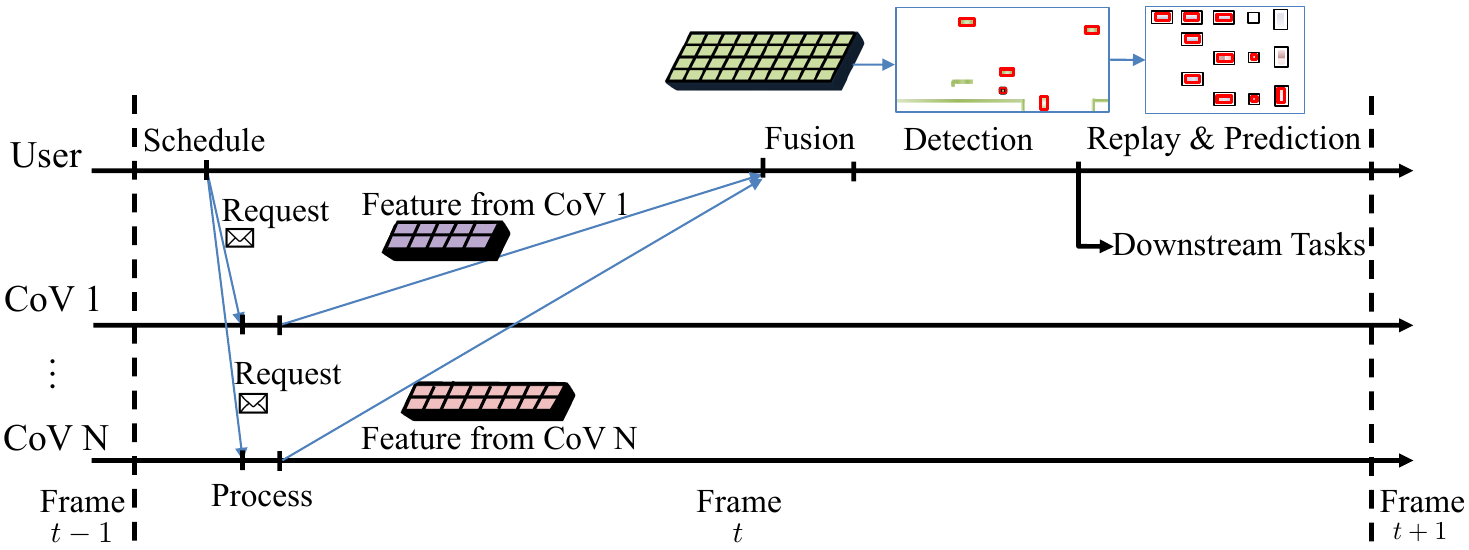}
    \caption{The workflow of the C-MASS framework.}
    \label{fig:pipeline}
\end{figure*}

Consider a discrete-time intelligent transportation system, where vehicles are driving on the road and pedestrians are walking on the sidewalks.
A specific portion of the vehicles are CoVs, which periodically beacon CAMs to broadcast their motion status and declare their sensor sharing availability.
At time frame $t$, the CoVs available for sensor sharing are denoted by the set $\mathcal{V}_t$. 
The unconnected vehicles and the pedestrians are interested objects, denoted by $\mathcal{O}_t$, which need to be continuously detected by CP.
Depending on the configuration, the user could be a specific CoV in distributed CP, or the edge server served as the fusion center in edge-assisted CP.
Both $\mathcal{V}_t$ and $\mathcal{O}_t$ are time-varying because the vehicles and pedestrians move in and out of the communication range and the interest range of the user. 
We consider a pull-based CP paradigm, because the sensor data can be transmitted on demand and tailored to the need of the user.
The complete workflow of the C-MASS framework is illustrated in Fig.~\ref{fig:pipeline}.

\subsection{Communication Module}
The total available bandwidth for CP service of the user is given by $W_t$, which can be time-varying according to the network congestion status.
Assume the channel gain $c_i(t)$ between the user and any CoV $i\in\mathcal{V}_t$ is measurable by receiving the CAMs. 
We adopt a pull-based communication procedure introduced as follows.
The user first sends request messages to a subset of CoVs $\mathcal{A}_t \subseteq \mathcal{V}_t$, specifying its interested region and the allocated bandwidth $W_i(t)$.
Upon receiving a request, the CoV compresses and transmits its sensing data $\mathcal{X}_{i}^{(t)}$ accordingly. 
By Shannon's formula, the achievable transmission rate is given by
\begin{align} \label{eq:rate}
    r_i(t) = W_i(t)\log_2\left(1+\frac{P c_i(t)}{n_0 W_i(t)}\right),
\end{align}
where $P$ is the transmitting power, $n_0$ is the power spectrum density of the additive white Gaussian noise.
The rate constraint is given by
\begin{align} \label{eq:rate-constr}
     D\left(\mathcal{X}_{i}^{(t)}\right) \le r_i(t) \Delta t,
\end{align}
where $D(\mathcal{X}_{i}^{(t)})$ denotes the size of the requested BEV feature from CoV $i$, and $\Delta t$ is the length of communication in a frame, relating to the maximum tolerable delay of CP.
From (\ref{eq:rate}) and (\ref{eq:rate-constr}), the user can find a \emph{minimum required allocated bandwidth} $B_i(t)$ for lossless transmission of sensing data $\mathcal{X}_{i}^{(t)}$, by solving
\begin{align}
    B_i(t) \log_2\left(1+\frac{P c_i(t)}{n_0 B_i(t)}\right) = D\left(\mathcal{X}_{i}^{(t)}\right) / \Delta t.
\end{align}

\subsection{Detection Module}
% We consider a single user CoV 0, denoted by the \emph{ego CoV}.
% CoV $i\in\mathcal{V}_t$ can detect certain objects independently without CP, expressed by
% \begin{align}
%     \mathcal{P}^{(t)}_i = \left\{ n \in \mathcal{O}_t \left| \Phi \left(\mathcal{X}_{i,n}^{(t)}\right.\right) = 1  \right\} ,
% \end{align}
In time frame $t$, the user receives the sensing data $\mathcal{X}_{i}^{(t)}$ from the scheduled CoVs $\mathcal{A}_t$ and fuses the data in the feature space.
The fused feature map is fed through the backbone neural network and the detector head $\Phi(\cdot)$ to obtain the detection result of CP.
The detection result of CP with $\mathcal{A}_t$ is defined by
\begin{align}
    \mathcal{P}_t(\mathcal{A}_t) = \left\{ n \in \mathcal{O}_t \left| \Phi\left(\bigcup_{i\in\mathcal{A}_t}\mathcal{X}_{i,n}^{(t)}\right) = 1  \right\}, \right.
\end{align}
where $\mathcal{X}_{i,n}^{(t)}$ denotes the sensing data of object $n$ from CoV $i$ at time frame $t$.
\begin{definition}
    The \emph{perception topology} of a CP system with CoVs $\mathcal{V}_t$ at time frame $t$ is defined by $\{\mathcal{P}_t(\mathcal{S}), \forall \mathcal{S} \subseteq \mathcal{V}_t\}$.
\end{definition}

The complete perception topology includes the detection results of any subset of available CoVs. 
Note that the user has no accurate prior knowledge on the perception topology.
Even if perfect knowledge is available, a memory and computation complexity of $O(2^{|\mathcal{V}_t|})$ is required to store and compare all the subsets.
Based on the observation in Sect.~\ref{sect:stat-model}, we make the following assumption.

\begin{assumption} \label{ass1}
\emph{(Second-order Approximation)} Any detectable object by CP with CoVs $\mathcal{S}$ is detectable by an essential subset consisting of an individual CoV or a pair of CoVs. 
Formally, 
\begin{align}
    \mathcal{P}_t(\mathcal{S}) &= \left(\bigcup_{i \in \mathcal{S}} \mathcal{P}_t^{(1)}(i)\right) \cup \left(\bigcup_{\substack{i,j \in \mathcal{S}, i \neq j}} \mathcal{P}_t^{(2)}(i,j)\right),
\end{align}
where the \emph{first-order perception topology} of a CoV $i \in \mathcal{S}$ is defined by 
\begin{align}
    \mathcal{P}_t^{(1)}(i) = \left\{ n \in \mathcal{O}_t \left| \Phi\left(\mathcal{X}_{i,n}^{(t)}\right) = 1  \right\}\right.,
\end{align}
% the object set that can be independently detected by CoV $i$, 
and the \emph{second-order perception topology} of two distinct CoVs $i,j \in \mathcal{A}_t$ is defined by 
\begin{align}
    \mathcal{P}_t^{(2)}(i,j) &= \left\{ n \in \mathcal{O}_t \left| \Phi\left(\mathcal{X}_{i,n}^{(t)} \cup \mathcal{X}_{j,n}^{(t)}\right) = 1  \right\}\right. \notag \\
    &\quad \ \setminus \mathcal{P}_t^{(1)}(i) \setminus \mathcal{P}_t^{(1)}(j).
\end{align}
% the object set that can be jointly detected by CoVs $i,j$ but not in $\mathcal{P}_t^{(1)}(i)$ or $\mathcal{P}_t^{(1)}(j)$.
\end{assumption}
This assumption introduces a combinatorial structure for the perception topology.
It is then possible to derive the complete perception topology from the 
first-order perception topology $\{\mathcal{P}_t^{(1)}(i), \forall i\in\mathcal{V}_t\}$ and the second-order perception topology $\{\mathcal{P}_t^{(2)}(i,j), \forall i,j\in\mathcal{V}_t, i \neq j\}$, reducing the memory complexity to $O(|\mathcal{V}_t|^2)$.

\subsection{Replay Module}
\begin{figure*}[ht]
    \centering
    \includegraphics[width=0.78\textwidth]{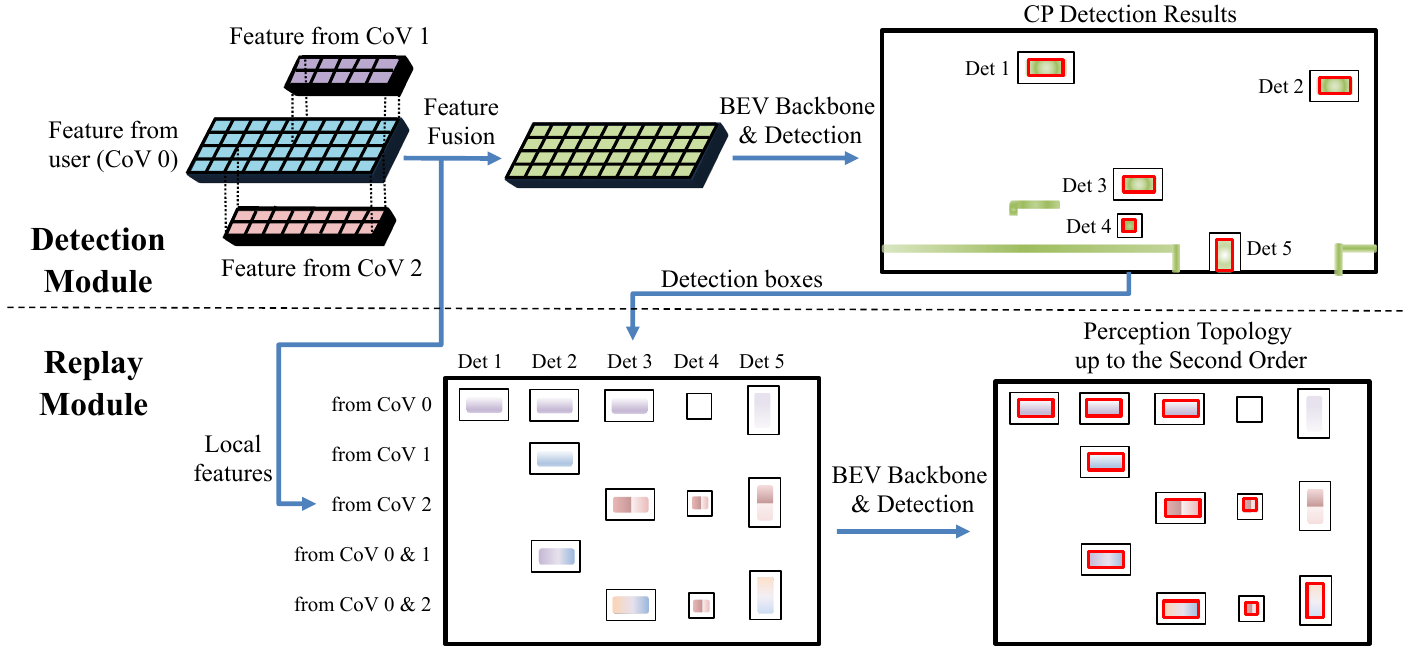}
    \caption{Illustration of the replay module.}
    \label{fig:replay}
\end{figure*}

Since the perception topology is unknown and changes gradually, following the idea of \emph{learning while scheduling}~\cite{mass}, we propose a replay module to feedback the perception topology after CP.
The aim of the replay module is to identify the perception topology associated with the scheduled CoVs.

As shown in Fig.~\ref{fig:replay}, the local features around the detection boxes are retrieved from the BEV feature maps. 
The local features from any individual CoVs and fused from any pair of CoVs, if exists, form a new feature map for replay.
The replay feature map is fed into the detection head, which outputs the perception topology up to the second order. 
While there can be $(|\mathcal{A}_t|-1)|\mathcal{A}_t|/2$ local features for replaying a single detection, the size of local features is very small compared to the size of a complete feature map.
Therefore, it is computationally efficient with spatial-wise and possibly batch-wise parallelization.
In real-world deployment, CP models such as CoBEVT can achieve an inference speed of over 50 fps when collaborating with 5 CoVs~\cite{cobevt}.
Therefore, it is feasible to conduct the replay before the next frame of CP, which is usually required at no more than 10 fps.

With the replay module, the user can maintain an empirical perception topology up to the second order from the latest observation, i.e.,
\begin{align}
    \Hat{\mathcal{P}}_t^{(1)}(i) = \mathcal{P}_{\tau_1}^{(1)}(i),&\quad \forall i \in \mathcal{V}_t, \label{eq:replay-first} \\
    \Hat{\mathcal{P}}_t^{(2)}(i,j) = \mathcal{P}_{\tau_2}^{(2)}(i,j),&\quad \forall i,j \in \mathcal{V}_t,\ i \neq j, \label{eq:replay-second}
\end{align}
where $\tau_1 = \arg\max_{i \in \mathcal{A}_s,s<t} s$, $\tau_2 = \arg\max_{i,j \in \mathcal{A}_s,s<t} s$ are the latest scheduled times.
For fresh knowledge of the perception topology, the user needs to acquire the sensor data from less beneficial CoVs once in a while.
Otherwise, the user would schedule with inaccurate empirical perception topology, which impairs the overall CP performance.

\subsection{Prediction Module}
For frame $t+1$, the empirical perception topology can be partially corrected by motion prediction and spatial reasoning.
Specifically, the user can construct a local environment map with the predicted positions of the tracked objects.
The LoS condition between each CoV and object can be further deduced from the map by ray tracing method.
While a LoS condition does not guarantee a detection, it is the necessary condition for detection.
Therefore, we can refine the empirical perception topology by updating
\begin{align}
    \Hat{\mathcal{P}}_{t+1}^{(1)}(i) &\gets \Hat{\mathcal{P}}_{t+1}^{(1)}(i) \cap \Hat{\mathcal{L}}_{t+1}(i), \label{eq:confine-first}\\
    \Hat{\mathcal{P}}_{t+1}^{(2)}(i,j) &\gets \Hat{\mathcal{P}}_{t+1}^{(2)}(i,j) \cap \Hat{\mathcal{L}}_{t+1}(i) \cap \Hat{\mathcal{L}}_{t+1}(j), \label{eq:confine-second}
\end{align}
where $\Hat{\mathcal{L}}_{t+1}(i) \subseteq \mathcal{O}_t$ denotes the set of objects that are predicted to have a LoS path with CoV $i$.
Although there are other unpredictable factors except occlusions, the refined empirical perception topology becomes more conservative and accurate.

Moreover, when the LoS path between a CoV and an object emerges, it is more likely that the perception topology changes, which requires exploration.
Define the topological uncertainty $\mathcal{U}_t(i)$ as the set of objects that are predicted to be additionally detected by CoV~$i$.
It is updated by
\begin{align}
    \mathcal{U}_{t+1}(i) = 
    \begin{cases}
        \left(\mathcal{L}^C_t(i) \cap \Hat{\mathcal{L}}_{t+1}(i)\right) \cup \mathcal{U}_{t}(i), &\quad i \in \mathcal{A}_t, \\
        \mathcal{L}^C_t(i) \cap \Hat{\mathcal{L}}_{t+1}(i), &\quad i \notin \mathcal{A}_t,
    \end{cases} \label{eq:update-topo-unc}
\end{align}
where $\mathcal{L}^C_t(i) = \mathcal{O}_t \setminus \mathcal{L}_t(i)$ denotes the set of objects without a LoS path with CoV $i$ at frame $t$.
The CoVs with topological uncertainty are encouraged to be scheduled for exploration, depending on the specific implementation of the algorithm.

\section{Problem Formulation}
To account for the heterogeneous influence on downstream tasks, we adopt a \emph{weighted recall} metric, defined as
\begin{align} \label{eq:def:actual}
    \sum_{t=1}^T \sum_{n \in \mathcal{P}_t (\mathcal{A}_t)} w_n(t) \bigg/ \sum_{t=1}^T \sum_{n \in \mathcal{O}_t} w_n(t),
\end{align}
where $w_n(t)$ is the importance weight of object $n$ at frame $t$, depending on the type and the state of the user.
Particularly, if $w_n(t)=1$ for any object at any time, the weighted recall is same as the conventional recall metric.
Define the \emph{CP utility} of scheduling $\mathcal{A}_t$ at frame $t$ by
\begin{align}
    g_t(\mathcal{A}_t)=\sum_{n \in \mathcal{P}_t (\mathcal{A}_t)} w_n(t).
\end{align}
The objective is to maximize the CP utility over a trip of $T$ frames, by scheduling optimal sets of CoVs for CP under the bandwidth constraint, i.e.,
\begin{align} \label{formula:problem}
    \max_{\mathcal{A}_t \subseteq \mathcal{V}_t} &\quad \sum_{t=1}^T g_t(\mathcal{A}_t), \\
    \mathrm{s.t.} 
    % &\quad D\left(\mathcal{X}_{i}^{(t)}\right) \le B_i(t)\log_2\left(1+\frac{Ph_i(t)}{n_0 B_i(t)}\right) \Delta t, \\
    % &\quad \forall t, \ \forall i \in \mathcal{A}_t, \\
    &\quad \sum_{i\in\mathcal{A}_t} B_i(t) \le W_t,\ \ \forall t=1,2,\dots,T.
\end{align}

This is a variant of the budgeted maximum coverage (BMC) problem~\cite{bmc}.
In the classic BMC problem, each set can \emph{independently} cover certain elements, and there are possibly overlaps.
The goal is to maximize the total weights of the covered elements by selecting a collection of sets under a given budget.
If the first-order approximation of the perception topology was adopted, the sensor scheduling problem would be equivalent to the BMC problem.
The BMC problem, even with unit set costs, is NP-Hard by a polynomial reduction from the set cover problem.
It has a \emph{submodularity} property~\cite{approx-alg} defined below.

\begin{definition}
    A set function $f: 2^{|X|}\to\mathbb{R}$ is \emph{submodular} if 
    \begin{align}  \label{eq:def-submodular}
        f(S\cup\{i\}) - f(S) \ge f(T\cup\{i\}) - f(T),
    \end{align}
    for $\forall S \subseteq T \subset X$ and $i \in X \setminus T$.
\end{definition}

The submodularity property requires that there is a diminishing marginal return for adding a sensor when the already scheduled sensor collection is larger. 
It is natural to solve with a greedy algorithm that sequentially picks the sensor with the highest ratio of uncovered element weights to cost.
Leveraging the submodularity property, the greedy algorithm is proved to achieve a $(1-1/e)$-approximation ratio, which is the best possible approximation algorithm~\cite{bmc}. 
However, with the second-order approximation to perception topology, $g_t(\mathcal{S})$ is not submodular.
Although there is a generalized definition of submodularity ratio~\cite{submodular, non-submodular}, it has zero value in our setting which leads to no performance guarantee.
Therefore, applying greedy algorithm directly to the actual CP utility can perform very badly.

\begin{figure}[t]
	\centering
	\subfloat[]{\includegraphics[width=0.42\linewidth]{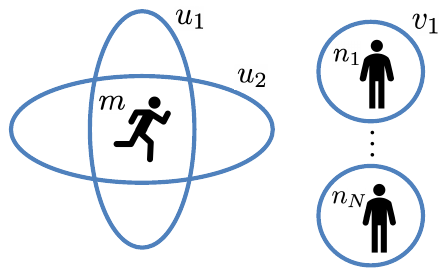}
		\label{subfig:case1}}
	\hfill
	\subfloat[]{\includegraphics[width=0.53\linewidth]{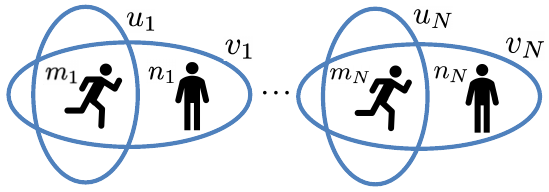}
		\label{subfig:case2}}
	\caption{Perception topologies in two artificial cases. A running pedestrian requires two sensors for detection, while a standing pedestrian requires one sensor for detection.}
	\label{fig:cases}
\end{figure}

\textbf{Example 1}: The object set is $\mathcal{O}=\{m, n_1, \dots, n_N\}$ with weights $w_m = 1$, $w_{n_1} = \dots = w_{n_N} = \epsilon > 0$. 
The CoV set is $\mathcal{V} = \{u_1, u_2, v_1, \dots, v_N\}$ with unit cost. 
The perception topology is illustrated in Fig.~\ref{subfig:case1}, where object $m$ requires joint detection from $u_1$ and $u_2$.
With a cost budget of $B \le N$, the greedy algorithm that maximizes the \emph{actual utility} sequentially selects $v_1, \dots, v_B$, receiving a $\epsilon B$ perception gain.
On the other hand, the optimal gain is greater than $1$ when $u_1, u_2$ are scheduled together.
Therefore, the worst-case approximation ratio of the greedy algorithm approaches $0$ as $\epsilon \to 0$, because it cannot take joint detections into consideration.

To mitigate this problem, we define the \emph{pending utility} as
\begin{align} \label{eq:def-pending}
    &g^+_t (\mathcal{S}) = g_t(\mathcal{S}) \notag \\
    &+ \sum_{n \in \mathcal{P}_t^C(\mathcal{S})} w_n(t) \max_{\substack{i\in\mathcal{S} \\ j\in \mathcal{S}^C}}\mathds{1}\left\{n\in\mathcal{P}^{(2)}_t(i,j)\right\} \tfrac{B_i(t)}{B_i(t)+B_j(t)},
\end{align}
where $\mathcal{P}^C_t (\mathcal{S}) = \mathcal{O}_t \setminus \mathcal{P}_t (\mathcal{S})$ and $\mathcal{S}^C = \mathcal{V}_t \setminus \mathcal{S}$ are complement of sets.
The object weight from a joint detection of CoVs $i$ and $j$ is evenly split according to their bandwidth costs.
In this way, fair credits are given to the preliminary CoV although actual utility is not immediately incurred.
The pending gain has the nice property of submodularity.

\begin{lemma} \label{lem:submodular}
    The pending utility $g_t^+(\mathcal{S})$ is submodular.
\end{lemma}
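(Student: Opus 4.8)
The plan is to reduce the claim to a per-object statement and then run a case analysis driven by the detection status of the candidate sets. Since submodularity is preserved under nonnegative linear combinations and every weight $w_n(t)\ge 0$, I would first write $g_t^+(\mathcal{S}) = \sum_{n\in\mathcal{O}_t} w_n(t)\, h_n(\mathcal{S})$, where, using Assumption~\ref{ass1} together with the disjointness already built into $\mathcal{P}_t^{(2)}$,
\begin{align}
h_n(\mathcal{S}) &= \mathds{1}\{n\in\mathcal{P}_t(\mathcal{S})\} + \mathds{1}\{n\notin\mathcal{P}_t(\mathcal{S})\}\, P_n(\mathcal{S}), \notag \\
P_n(\mathcal{S}) &= \max_{i\in\mathcal{S},\,j\in\mathcal{S}^C}\mathds{1}\{n\in\mathcal{P}^{(2)}_t(i,j)\}\tfrac{B_i(t)}{B_i(t)+B_j(t)}. \notag
\end{align}
It then suffices to show each $h_n$ is submodular, i.e. $h_n(\mathcal{S}\cup\{k\}) - h_n(\mathcal{S}) \ge h_n(\mathcal{T}\cup\{k\}) - h_n(\mathcal{T})$ for all $\mathcal{S}\subseteq\mathcal{T}$ and $k\notin\mathcal{T}$.

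Fixing $n$, I would exploit that detection is \emph{monotone} under Assumption~\ref{ass1}: a detecting subset of $\mathcal{S}$ remains a detecting subset of any superset. Hence if $\mathcal{S}$ already detects $n$ then so do $\mathcal{T}$, $\mathcal{S}\cup\{k\}$, $\mathcal{T}\cup\{k\}$, all marginals vanish, and the inequality is trivial. The remaining cases have $\mathcal{S}$ undetected. A useful structural fact is that, whenever $\mathcal{S}\cup\{k\}$ is still undetected, no pair $(p,k)\in\mathcal{P}_t^{(2)}$ with $p\in\mathcal{S}$ can exist (it would detect $n$), so the straddling pairs of $\mathcal{S}$ survive intact and
\begin{align}
P_n(\mathcal{S}\cup\{k\}) = \max\big(P_n(\mathcal{S}),\, Q_{k,n}(\mathcal{S})\big),\quad Q_{k,n}(\mathcal{S}) = \max_{j\notin\mathcal{S}\cup\{k\}}\mathds{1}\{n\in\mathcal{P}^{(2)}_t(k,j)\}\tfrac{B_k(t)}{B_k(t)+B_j(t)}. \notag
\end{align}
In the undetected-to-undetected regime the marginal gain is therefore $(Q_{k,n}(\cdot)-P_n(\cdot))^{+}$; since $P_n$ is nondecreasing and $Q_{k,n}$ is nonincreasing as the set grows from $\mathcal{S}$ to $\mathcal{T}$, this positive part is nonincreasing, giving submodularity here. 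Likewise, if $\mathcal{S}\cup\{k\}$ is detected (so $\mathcal{T}\cup\{k\}$ is too), both marginals equal $1-P_n(\cdot)$, and monotonicity $P_n(\mathcal{T})\ge P_n(\mathcal{S})$ closes the case.

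The one genuinely delicate case, which I expect to be the main obstacle, is when adding $k$ leaves $\mathcal{S}\cup\{k\}$ undetected but makes $\mathcal{T}\cup\{k\}$ detected (with $\mathcal{S},\mathcal{T}$ both undetected). Here $\Delta_{\mathcal{S}} = P_n(\mathcal{S}\cup\{k\})-P_n(\mathcal{S})$ must dominate $\Delta_{\mathcal{T}} = 1-P_n(\mathcal{T})$. The detection of $\mathcal{T}\cup\{k\}$ can only come from a pair $(k,i)$ with $i\in\mathcal{T}\setminus\mathcal{S}$ (it is neither an individual detector $k$ nor an internal pair of $\mathcal{T}$), and the \emph{even credit split} is exactly what is designed to absorb the unit jump: the same pair yields $P_n(\mathcal{S}\cup\{k\})\ge \tfrac{B_k(t)}{B_k(t)+B_i(t)}$ and $P_n(\mathcal{T})\ge \tfrac{B_i(t)}{B_i(t)+B_k(t)}$, and $\tfrac{B_k(t)}{B_k(t)+B_i(t)}+\tfrac{B_i(t)}{B_i(t)+B_k(t)}=1$. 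The crux is that the required inequality is equivalent to $P_n(\mathcal{S}\cup\{k\})+P_n(\mathcal{T})\ge 1+P_n(\mathcal{S})$, and while the split identity delivers the ``$\,\ge 1\,$'' part, the extra $P_n(\mathcal{S})$ term is not free: because $P_n$ is a \emph{maximum}, $P_n(\mathcal{S})$ may be realized by a different straddling pair. I would attempt to close this by arguing the pair realizing $P_n(\mathcal{S})$ persists as a straddling pair of both $\mathcal{S}\cup\{k\}$ and $\mathcal{T}$, and combining this with the split identity; this telescoping is where the argument is most fragile, and it is precisely here that one must control configurations in which a single object admits two disjoint detecting pairs straddling the cut, since the $\max$ can otherwise stall $\Delta_{\mathcal{S}}$ while $\Delta_{\mathcal{T}}$ remains positive.
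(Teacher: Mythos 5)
Your decomposition into per-object contributions, your monotonicity observations, and the three cases you do close all match the paper's own proof, which runs exactly the same per-object case analysis. But the case you flag as "genuinely delicate" — $\mathcal{S}$, $\mathcal{T}$, $\mathcal{S}\cup\{k\}$ undetected while $\mathcal{T}\cup\{k\}$ becomes detected — is not closed by the paper either. At the corresponding point the paper simply asserts that $n \notin \mathcal{P}_t(\mathcal{S}\cup\{i\})$ implies $n \notin \mathcal{P}_t(\mathcal{T}\cup\{i\})$, which is false: detection of $\mathcal{T}\cup\{i\}$ can be completed by a pair $(i,k)$ with $k \in \mathcal{T}\setminus\mathcal{S}$, which is precisely the configuration you isolate. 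So the paper's proof silently excludes the one hard case rather than resolving it.

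Moreover, your worry about "a single object admitting two disjoint detecting pairs straddling the cut" is not merely a fragile step — it is a counterexample, so no telescoping argument can rescue your plan, and the lemma as stated is false. Take $\mathcal{V}_t = \{a,b,c,d\}$ with equal bandwidth costs, a single object $n$ with $w_n = 1$, empty first-order topology, and $\mathcal{P}_t^{(2)}(a,b) = \mathcal{P}_t^{(2)}(c,d) = \{n\}$ as the only detecting pairs. With $\mathcal{S} = \{a\}$, $\mathcal{T} = \{a,c\}$ and added element $d$, the pending utilities are
\begin{align*}
g_t^+(\{a\}) = g_t^+(\{a,d\}) = g_t^+(\{a,c\}) = \tfrac{1}{2}, \qquad g_t^+(\{a,c,d\}) = 1,
\end{align*}
so the marginal at $\mathcal{S}$ is $0$ while the marginal at $\mathcal{T}$ is $\tfrac{1}{2}$, violating (\ref{eq:def-submodular}). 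In your notation, $P_n(\mathcal{S}\cup\{k\}) + P_n(\mathcal{T}) = 1$ falls short of $1 + P_n(\mathcal{S}) = \tfrac{3}{2}$: the maximum stalls $\Delta_{\mathcal{S}}$ exactly as you feared, because the credit from the pair $(a,b)$ already saturates it. The statement can only be repaired under an extra hypothesis — for instance, that each object belongs to $\mathcal{P}_t^{(2)}(i,j)$ for at most one pair $\{i,j\}$, in which case $P_n(\mathcal{S}) = 0$ in the hard case and your credit-split identity closes it with equality — or by redefining the pending credit so it is not a maximum over straddling pairs. In short, your proof cannot be completed as planned, but the defect lies in the lemma (and in the paper's proof of it), not in your approach: you correctly located the exact point of failure.
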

\begin{proof}
    See Appendix \ref{app:submodular}.
\end{proof}
One may attempt to maximize the pending utility in a greedy manner, but Example 2 closes the door. 

\textbf{Example 2}: The object set is $\mathcal{O}=\{m_1, \dots, m_N, n_1$, $ \dots, n_N\}$ with weights $w_{m_1} = \dots = w_{m_N} = 1, w_{n_1} = \dots = w_{n_N} = \epsilon > 0$. 
The CoV set is $\mathcal{V} = \{u_1, \dots, u_N, v_1, \dots, v_N\}$ with unit cost. 
The perception topology is illustrated in Fig.~\ref{subfig:case2}, where object $m_i$ requires joint detection from $u_i$ and $v_i$.
With a cost budget of $B \le N$, the greedy algorithm maximizing the pending utility sequentially selects $v_1, \dots, v_B$, with an actual utility of $\epsilon B$.
However, when $u_i$ and $v_i$ are scheduled together, the optimal utility is at least $\lfloor B/2 \rfloor$.
The worst-case approximation ratio approaches $0$ again as $\epsilon \to 0$.
The problem is that the pending utility may never get transformed into the actual CP utility within the cost budget.

Furthermore, since the perception topology $\mathcal{P}_t$ is unavailable and dynamic, the user needs to learn and maintain a fresh empirical perception topology $\Hat{\mathcal{P}}_t$.
A less helpful CoV should be scheduled occasionally to gain knowledge about its current perception topology, which is termed \emph{exploration}.
The user then optimizes the sensor scheduling among CoVs with the up-to-date, mostly accurate perception topology, which is termed \emph{exploitation}.
Therefore, the algorithm should also balance exploration and exploitation in the sensor scheduling process.

\section{Algorithms}
In this section, we first address the non-submodularity of the actual CP utility by proposing a low-complexity hybrid greedy algorithm for the one-shot scheduling problem. 
Then we apply modifications to address the exploration-exploitation trade-off over time.

\subsection{The Hybrid Greedy Algorithm}
For now, let us assume the empirical perception topology is accurate, i.e., $\Hat{\mathcal{P}}_t = \mathcal{P}_t$.
Then the decisions become decoupled, and the time index $t$ is omitted for simplicity.

To capture the joint detections while being pragmatic, we propose a \emph{hybrid greedy algorithm} as shown in Algorithm~\ref{alg:greedy}.
Define a \emph{hybrid perception utility}, which is a weighted average of the actual utility and the pending utility, expressed by 
\begin{align}
    h(S) &= \lambda g^+(\mathcal{S}) + (1-\lambda) g(\mathcal{S}).
\end{align}
The parameter $\lambda\in(0,1)$ adjusts the tendency to schedule a CoV for future joint detection.
In each round, the algorithm sequentially schedules the CoV that maximizes the utility-to-cost ratio.
The algorithm maintains the current detection levels of each object by a vector $\mathbf{d} \in [0,1]^{|O|}$. 
The $n$-th element of $\mathbf{d}$ is given by
\begin{align} \label{eq:def-level}
    \mathbf{d}_n = 
    \begin{cases}
        1, &n \in \mathcal{P}(\mathcal{A}), \\
        \max\limits_{\substack{i\in\mathcal{A} \\ j\in \mathcal{A}^C}}\mathds{1}\left\{n\in\mathcal{P}^{(2)}_t(i,j)\right\} \tfrac{B_i}{B_i+B_j}, &n \notin \mathcal{P}(\mathcal{A}).
    \end{cases}
\end{align}
According to the definitions (\ref{eq:def:actual}), (\ref{eq:def-pending}), the actual utility and the pending utility can be calculated by 
\begin{align}
    g(\mathcal{A}) &= \sum_{n\in\mathcal{O}} w_n \lfloor \mathbf{d}_n \rfloor, \label{eq:calc-actual}\\
    g^+(\mathcal{A}) &= \sum_{n\in\mathcal{O}} w_n \mathbf{d}_n, \label{eq:calc-pending}
\end{align}
For efficient computation of $\mathbf{d}$, we also introduce an auxiliary matrix $\mathbf{P}\in[0,1]^{|V|\times|O|}$.
As initialized in Line~3, each row of $\mathbf{P}$ represents the detection levels of objects of a CoV, incorporating both the first-order and the second-order perception topology.
In Line~12, after a CoV $v$ is scheduled, the detection levels of other CoVs are improved according to the second-order topology $\mathcal{P}^{(2)}(\cdot,v)$. 
By maintaining $\mathbf{d}$ and $\mathbf{P}$, the algorithm can efficiently calculate the \emph{marginal} utilities, defined by
\begin{align}
    g(i|\mathcal{A}) &= g(\mathcal{A} \cup \{i\}) - g(\mathcal{A}), \label{eq:marginal-actual} \\
    g^+(i|\mathcal{A}) &= g^+(\mathcal{A} \cup \{i\}) - g^+(\mathcal{A}). \label{eq:marginal-pending}
\end{align}

\begin{figure}[t]
    \renewcommand{\algorithmicrequire}{\textbf{Input:}}
    \renewcommand{\algorithmicensure}{\textbf{Output:}}
    \begin{algorithm}[H]
    \begin{algorithmic}[1]
        \REQUIRE $\mathcal{V}, \mathcal{P}^{(1)}(i), \mathcal{P}^{(2)}(i,j), w_n, B_i, W$
        \ENSURE $\mathcal{A}$
        \STATE $\mathcal{A} \gets \emptyset$
        \STATE Initialize the current detection level vector $\mathbf{d} = \mathbf{0}^{1\times|O|}$.
        \STATE Initialize the detection level matrix $\mathbf{P}\in[0,1]^{|V|\times|O|}$ by
        \begin{align} \label{eq:init-matrix}
            \mathbf{P}_{i,n} = 
            \begin{cases}
                1, &n \in \mathcal{P}^{(1)}(i), \\
                \max\limits_{j\in \mathcal{V}}\mathds{1}\left\{n\in\mathcal{P}^{(2)}(i,j)\right\} \tfrac{B_i}{B_i+B_j}, &n \notin \mathcal{P}^{(1)}(i),
            \end{cases}
        \end{align}
        \WHILE{$\min_{i \in \mathcal{V} \setminus \mathcal{A}} B_i \le W$}
            \FORALL{$i \in \mathcal{V} \setminus \mathcal{A}$}
                \STATE Calculate the \emph{marginal hybrid} perception utility by
                \begin{align}
                    g(i|\mathcal{A}) &= \sum_{n\in\mathcal{O}} w_n \max\{\lfloor \mathbf{P}_{i,n} \rfloor - \lfloor\mathbf{d}_n\rfloor,\ 0\} \label{eq:calc-marginal-immediate}, \\
                    g^+(i|\mathcal{A}) &= \sum_{n\in\mathcal{O}} w_n \max\{\mathbf{P}_{i,n} - \mathbf{d}_n,\ 0\} \label{eq:calc-marginal-pending}, \\
                    h(i|\mathcal{A}) &= \lambda g^+(i|\mathcal{A}) + (1-\lambda) g(i|\mathcal{A}).
                \end{align}
            \ENDFOR
            \STATE Schedule the CoV that maximizes utility-to-cost ratio
            \begin{align}
                v = \argmax_{i \in \mathcal{V} \setminus \mathcal{A}} \mathds{1}\left\{ B_i \le W \right\} \frac{h(i|\mathcal{A})}{B_i}.
            \end{align}
            \STATE $\mathcal{A} \gets \mathcal{A} \cup \{v\}$.
            \STATE $W \gets W - B_{v}$.
            \STATE Update the current detection level vector
            \begin{equation} \label{eq:update-level}
                \mathbf{d}_n \gets \max\{ \mathbf{d}_n,\ \mathbf{P}_{v,n}\},
            \end{equation}
            \STATE Update the detection level matrix
            \begin{align} \label{eq:update-matrix}
                    \mathbf{P}_{i,n} \gets 
                    \begin{cases}
                        1, &\quad n \in \mathcal{P}^{(2)} (i,v), \\
                        \mathbf{P}_{i,n}, &\quad n \notin \mathcal{P}^{(2)} (i,v).
                    \end{cases}
            \end{align}
        \ENDWHILE
    \end{algorithmic}
    \caption{The Hybrid Greedy Algorithm}
    \label{alg:greedy}
    \end{algorithm}
\end{figure}

\begin{lemma} \label{lem:alg1-calc}
    When running Algorithm~\ref{alg:greedy}, 
    \begin{enumerate}
        \item[a)] The current detection levels $\mathbf{d}$ equal to the definition (\ref{eq:def-level}).
        \item[b)] The marginal utilities can be calculated by (\ref{eq:calc-marginal-immediate}) and (\ref{eq:calc-marginal-pending}).
    \end{enumerate}
\end{lemma}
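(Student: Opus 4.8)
The plan is to establish both claims together by induction on the iterations of the while loop, carrying a single invariant on the pair $(\mathbf{d},\mathbf{P})$. Write $\mathbf{d}_n(\mathcal{S})$ for the value assigned to object $n$ by the definition (\ref{eq:def-level}) at a scheduled set $\mathcal{S}$. Let $\mathcal{A}$ be the scheduled set at the top of an iteration. The invariant I would maintain is: (i) $\mathbf{d}_n=\mathbf{d}_n(\mathcal{A})$ for every $n$; and (ii) for each unscheduled $i\in\mathcal{V}\setminus\mathcal{A}$, $\mathbf{P}_{i,n}=1$ exactly when $n\in\mathcal{P}^{(1)}(i)\cup\bigcup_{k\in\mathcal{A}}\mathcal{P}^{(2)}(i,k)$, and otherwise $\mathbf{P}_{i,n}=\max_{j\in\mathcal{V}\setminus(\mathcal{A}\cup\{i\})}\mathds{1}\{n\in\mathcal{P}^{(2)}(i,j)\}\tfrac{B_i}{B_i+B_j}$. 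Part (a) is precisely clause (i), and part (b) will drop out once the invariant is proven.

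The base case is immediate: at $\mathcal{A}=\emptyset$ the definition (\ref{eq:def-level}) gives $\mathbf{d}=\mathbf{0}$ as in Line~2, and the initialization (\ref{eq:init-matrix}) is exactly clause (ii) at $\mathcal{A}=\emptyset$, where the nominal $j=i$ term is harmless because $\mathcal{P}^{(2)}(i,i)=\emptyset$. For the inductive step I would schedule a CoV $v$ and show that the updates (\ref{eq:update-level}) and (\ref{eq:update-matrix}) restore the invariant at $\mathcal{A}'=\mathcal{A}\cup\{v\}$. The crux for clause (i) is the identity $\mathbf{d}_n(\mathcal{A}')=\max\{\mathbf{d}_n(\mathcal{A}),\mathbf{P}_{v,n}\}$, which is exactly what update (\ref{eq:update-level}) computes. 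I would verify it by cases on the coverage status of $n$: if $n\in\mathcal{P}(\mathcal{A})$ both sides equal $1$; if $n$ is newly covered by $v$, i.e. $n\in\mathcal{P}^{(1)}(v)\cup\bigcup_{k\in\mathcal{A}}\mathcal{P}^{(2)}(v,k)$, then $\mathbf{P}_{v,n}=1$ by (ii) while $\mathbf{d}_n(\mathcal{A})<1$, so the maximum is $1$, matching $\mathcal{P}(\mathcal{A}')=\mathcal{P}(\mathcal{A})\cup\mathcal{P}^{(1)}(v)\cup\bigcup_{k\in\mathcal{A}}\mathcal{P}^{(2)}(v,k)$.

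I expect the uncovered case $n\notin\mathcal{P}(\mathcal{A}')$ to be the main obstacle, since it is where the index ranges of the pending maxima must be reconciled. The observation that pushes it through is that $n\notin\mathcal{P}(\mathcal{A}')$ forces $\mathds{1}\{n\in\mathcal{P}^{(2)}(k,v)\}=0$ for every $k\in\mathcal{A}\cup\{v\}$. Granting this, the pending maximum defining $\mathbf{d}_n(\mathcal{A}')$ over pairs $k\in\mathcal{A}',\,j\notin\mathcal{A}'$ splits into its $k\in\mathcal{A}$ part, whose index set can be enlarged back to $j\notin\mathcal{A}$ at no cost (the $j=v$ terms vanish) so that it equals $\mathbf{d}_n(\mathcal{A})$, and its $k=v$ part, which equals $\mathbf{P}_{v,n}$ by (ii); their maximum is the claimed identity. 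The same observation certifies the matrix update (\ref{eq:update-matrix}) and hence clause (ii): the update promotes $\mathbf{P}_{i,n}$ to $1$ on exactly the objects that leave the pending regime through the new pair $(i,v)$, while for every object that stays uncovered the dropped index $j=v$ again contributes zero, so the retained entry automatically equals the pending maximum over the shrunken set $\mathcal{V}\setminus(\mathcal{A}'\cup\{i\})$.

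Finally, part (b) follows from the invariant. Combining the marginal definitions (\ref{eq:marginal-actual}), (\ref{eq:marginal-pending}) with (\ref{eq:calc-actual}), (\ref{eq:calc-pending}) and the identity $\mathbf{d}_n(\mathcal{A}\cup\{i\})=\max\{\mathbf{d}_n,\mathbf{P}_{i,n}\}$ just established gives $g(\mathcal{A}\cup\{i\})=\sum_n w_n\lfloor\max\{\mathbf{d}_n,\mathbf{P}_{i,n}\}\rfloor$ and $g^+(\mathcal{A}\cup\{i\})=\sum_n w_n\max\{\mathbf{d}_n,\mathbf{P}_{i,n}\}$. Since $\lfloor\max\{a,b\}\rfloor=\max\{\lfloor a\rfloor,\lfloor b\rfloor\}$ for $a,b\in[0,1]$, subtracting $g(\mathcal{A})$ and $g^+(\mathcal{A})$ term by term and using $\max\{x,y\}-y=\max\{x-y,0\}$ yields precisely (\ref{eq:calc-marginal-immediate}) and (\ref{eq:calc-marginal-pending}).
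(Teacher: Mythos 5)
Your proof is correct, and it reaches the same two facts the paper proves, but it is organized along a genuinely different axis. The paper fixes an \emph{object} and traces its fate over the whole run: part (a) is a three-way case analysis (first-order detected, second-order detected within $\mathcal{A}$, undetected) on the final state, and part (b) is then a second, independent object-wise case analysis that re-derives the values of $\mathbf{P}_{i,n}$ and $\mathbf{d}_n$ in each case and matches calculation against definition. You instead fix an \emph{iteration} and carry a joint loop invariant on $(\mathbf{d},\mathbf{P})$, whose heart is the one-step identity $\mathbf{d}_n(\mathcal{A}\cup\{v\})=\max\{\mathbf{d}_n(\mathcal{A}),\mathbf{P}_{v,n}\}$; since that identity holds for \emph{any} unscheduled candidate $i$ (its proof only uses clause (ii) for an unscheduled CoV), part (b) falls out as a two-line corollary via $\lfloor\max\{a,b\}\rfloor=\max\{\lfloor a\rfloor,\lfloor b\rfloor\}$ and $\max\{x,y\}-y=\max\{x-y,0\}$, rather than by repeating the case analysis. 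Your version buys two things: it makes explicit the statement that the paper leaves implicit in its third case (that an unpromoted entry $\mathbf{P}_{i,n}$ equals the pending maximum over \emph{currently unscheduled} partners, because the indicator kills the $j\in\mathcal{A}\cup\{i\}$ terms in the initialization's max over $\mathcal{V}$), and it avoids proving (b) from scratch. The paper's version is more concrete and shorter to state, at the cost of some terseness. Two small points to tighten: the ``exactly when'' direction of your clause (ii), like the paper's use of $\lfloor\mathbf{d}_n\rfloor$, tacitly assumes strictly positive bandwidth costs so that pending fractions $\tfrac{B_i}{B_i+B_j}<1$; and when you invoke ``the identity just established'' in part (b) you should remark explicitly that it applies to every $i\in\mathcal{V}\setminus\mathcal{A}$, not only the argmax $v$, which your inductive argument does in fact deliver.
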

\begin{proof} 
    See Appendix~\ref{app:alg1-calc}.
\end{proof}
Since $\mathbf{d}$ is non-decreasing, Lemma \ref{lem:alg1-calc} also states that the utilities are non-decreasing.

\subsubsection{Complexity Analysis}
The space complexity is dominated by the second-order perception topology $\mathcal{P}^{(2)}(i,j)$. 
For each pair of CoVs $i,j\in\mathcal{V}$, the perception topology can be stored as a binary vector with length $|\mathcal{O}|$, and the total space complexity is $O(|\mathcal{V}|^2|\mathcal{O}|)$.

For the time complexity, the initialization of the detection level matrix $\mathbf{P}$ takes $O(|\mathcal{V}|)$ for each element because it needs to check the second-order perception topology.
The complexity of initialization is thereby $O(|\mathcal{V}|^2|\mathcal{O}|)$.
In each round, the user first calculates the marginal hybrid perception utility of each candidate CoV at a complexity of $O(|\mathcal{O}|)$, resulting in a computation complexity of $O(|\mathcal{V}||\mathcal{O}|)$.
The comparison takes $O(|\mathcal{V}|)$ and the update of $\mathbf{P}$ takes $O(|\mathcal{V}||\mathcal{O}|)$.
Therefore, over $O(|\mathcal{V}|)$ rounds, the total computation complexity is efficiently restricted to $O(|\mathcal{V}|^2|\mathcal{O}|)$.
Note that our proposed hybrid greedy algorithm matches the computation complexity of the vanilla greedy algorithm even with the first-order approximation, which also takes $O(|\mathcal{V}|^2|\mathcal{O}|)$. 

\subsubsection{Performance Analysis}
We consider a special case where the bandwidth costs of CoVs are uniform, and the bandwidth budget allows scheduling $N$ CoVs.

Define the maximum collaboration degree by 
\begin{align}
    C = \max_{i\in\mathcal{V}}\sum_{j\in\mathcal{V}} \mathds{1}\left\{ \mathcal{P}^{(2)}(i,j) \neq \emptyset \right\}.
\end{align}
Specifically, the problem is submodular when $C=0$.
We set the weight parameter $\lambda=1/(C+1)$, depending on the maximum collaboration degree.

\begin{lemma} \label{lem:local}
    Denote the set of scheduled CoVs after the $n$-th scheduling round by $\mathcal{A}_n$, $n \in \{0,1, \cdots,N-2\}$, Algorithm~\ref{alg:greedy} satisfies either
    \begin{align} \label{eq:one-step}
        g(\mathcal{A}_{n+1}) - g(\mathcal{A}_{n}) \ge \frac{\gamma}{N}(g(OPT) - g(\mathcal{A}_{n})),
    \end{align}
    or 
    \begin{align} \label{eq:two-step}
        g(\mathcal{A}_{n+2}) - g(\mathcal{A}_{n}) \ge \frac{2\gamma}{N}(g(OPT) - g(\mathcal{A}_{n})),
    \end{align}
    where 
    \begin{align} \label{eq:gamma}
        \gamma = \begin{cases}
            1, &\quad C_\mathrm{max}=0, \\
            1/6, &\quad C_\mathrm{max}=1, \\
            1/(6C+2), &\quad C_\mathrm{max}\ge2.
        \end{cases}
    \end{align}
\end{lemma}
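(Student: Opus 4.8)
The plan is to adapt the classical greedy approximation argument for monotone submodular maximization under a cardinality constraint, exploiting that by Lemma~\ref{lem:submodular} the pending utility $g^+$ is submodular and, by the remark following Lemma~\ref{lem:alg1-calc}, monotone (since $\mathbf{d}$ is non-decreasing). Since the costs are uniform and the budget admits $N$ CoVs, the ratio rule reduces to picking the CoV of maximum marginal hybrid utility $h(i\mid\mathcal{A}_n)$. I fix a round index $n$ and argue the dichotomy by contradiction: assume the one-step bound (\ref{eq:one-step}) fails, i.e. $g(\mathcal{A}_{n+1})-g(\mathcal{A}_n)<\tfrac{\gamma}{N}(g(OPT)-g(\mathcal{A}_n))$, and deduce the two-step bound (\ref{eq:two-step}). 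The core difficulty, already exposed by Examples~1 and~2, is that $g$ itself is not submodular, so a single greedy step need not make progress in the \emph{actual} utility; the weighting $\lambda=1/(C+1)$ is chosen precisely so that pending credit accrued in one round is forced to convert into actual utility in the next.

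\textbf{Greedy marginal lower bound.} First I would lower-bound the greedy marginal through $g^+$. Using greedy optimality of $v$ with respect to $h$, averaging over the at most $N$ elements of $OPT\setminus\mathcal{A}_n$, the inequality $h(i\mid\mathcal{A}_n)\ge\lambda\,g^+(i\mid\mathcal{A}_n)$, and then submodularity and monotonicity of $g^+$, I obtain $h(v\mid\mathcal{A}_n)\ge\tfrac{\lambda}{N}\big(g^+(OPT)-g^+(\mathcal{A}_n)\big)\ge\tfrac{\lambda}{N}\big(g(OPT)-g^+(\mathcal{A}_n)\big)$, where the last step uses $g^+(OPT)\ge g(OPT)$. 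I then decompose the marginal as $h(v\mid\mathcal{A}_n)=(1-\lambda)\Delta+\lambda(\Delta+\delta_v)$, where $\Delta=g(v\mid\mathcal{A}_n)\ge 0$ is the one-step actual gain and $\delta_v\ge 0$ is the newly created pending weight (the half-covered objects that $v$ introduces). The nuisance term is $g^+(\mathcal{A}_n)-g(\mathcal{A}_n)$, the pre-existing pending: this is exactly the quantity that can inflate the left side and weaken the bound, and it must be absorbed by the conversion argument rather than discarded.

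\textbf{Conversion via the collaboration degree.} If the one-step bound fails, then $\Delta$ is small, so the marginal is dominated by $\delta_v$. The key structural observation is that the objects half-covered by $v$ are jointly detectable with $v$ by at most $C$ distinct partner CoVs (the collaboration degree bound), so by pigeonhole some partner $j^\star$ accounts for half-covered weight at least $2\delta_v/C$; after $v$ is scheduled, the update (\ref{eq:update-matrix}) sets $\mathbf{P}_{j^\star,n}=1$ for these objects, so scheduling $j^\star$ would realize actual gain at least $2\delta_v/C$. I would then argue that the greedy second pick $v'$ captures a controlled fraction of this: since $h(v'\mid\mathcal{A}_{n+1})\ge h(j^\star\mid\mathcal{A}_{n+1})\ge(1-\lambda)\,\tfrac{2\delta_v}{C}$, and since $\lambda=1/(C+1)$ caps how much of the hybrid marginal can again be pure pending, the second-round \emph{actual} gain is bounded below by a $\Theta(1/C)$ fraction of $\delta_v$ rather than being deferred indefinitely. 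Summing the two rounds and combining with the marginal bound from the previous paragraph yields (\ref{eq:two-step}); the three regimes $C=0$ (the submodular case, recovering the classical $\gamma=1$ directly), $C=1$, and $C\ge 2$ produce the stated constants in (\ref{eq:gamma}).

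\textbf{Main obstacle.} I expect the hard part to be this two-step conversion together with the amortized bookkeeping of pending utility. Two things must be controlled simultaneously: the pre-existing pending $g^+(\mathcal{A}_n)-g(\mathcal{A}_n)$ appearing in the marginal bound (the Example~2 phenomenon, where pending never materializes), and the fresh pending $\delta_v$ created by the greedy choice. Showing that both are converted into actual utility at a rate governed by $C$, while ensuring the greedy second pick prefers \emph{completing} a pair over accruing yet more pending (which is exactly what $\lambda=1/(C+1)$ must guarantee), is where the argument is delicate. Pinning down the precise constants $1/6$ and $1/(6C+2)$ will require a careful case split on the composition of $h(v\mid\mathcal{A}_n)$ between $\Delta$ and $\delta_v$, rather than any single clean inequality.
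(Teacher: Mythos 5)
Your high-level skeleton --- the dichotomy between (\ref{eq:one-step}) and (\ref{eq:two-step}), the decomposition of the greedy hybrid marginal into an actual part and a freshly initiated pending part, the pigeonhole over the at most $C$ partners, and a two-step completion argument --- does match the paper's proof. However, two load-bearing steps are missing or rest on an incorrect mechanism. The first is your OPT-comparison step. Via submodularity and monotonicity of $g^+$ you obtain $h(v\mid\mathcal{A}_n)\ge\tfrac{\lambda}{N}\bigl(g(OPT)-g^+(\mathcal{A}_n)\bigr)$, and you concede the deficit $g^+(\mathcal{A}_n)-g(\mathcal{A}_n)$ must be ``absorbed by the conversion argument.'' No such absorption is available inside the lemma's two-step window: the pre-existing pending is a history-dependent quantity that need never convert (exactly the Example~2 pathology), and it can be large enough that $g(OPT)-g^+(\mathcal{A}_n)$ is zero or negative, rendering your bound vacuous while $g(OPT)-g(\mathcal{A}_n)$ is still large. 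Since Lemma~\ref{lem:local} must hold at every round $n$ regardless of history --- Theorem~\ref{thm}'s induction needs precisely the form with $g(\mathcal{A}_n)$ --- this term cannot be deferred. The paper sidesteps the issue entirely: its proof never invokes Lemma~\ref{lem:submodular}, but instead pigeonholes directly on the \emph{actual} weight $g(OPT)-g(\mathcal{A}_n)$ of not-yet-detected objects covered by the $N$ CoVs of $OPT$ individually or in pairs, which yields $h(i\mid\mathcal{A}_n)\ge\tfrac{\lambda}{2N}\bigl(g(OPT)-g(\mathcal{A}_n)\bigr)$, the factor $\lambda/2$ coming from the worst case in which the OPT CoV's entire worth is from initiating joint detections.

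The second and more serious gap is in the conversion step. You claim ``$\lambda=1/(C+1)$ caps how much of the hybrid marginal can again be pure pending.'' It does not: a CoV initiating joint detections of total actual weight $W$ receives hybrid credit $(\lambda/2)W$, which is unbounded in $W$; a small $\lambda$ discounts pending but does not cap it, so nothing in your argument prevents the second greedy pick $v'$ from having enormous hybrid utility consisting entirely of new pending and zero actual gain. The paper's mechanism is different and is the crux of the proof: any competitor $k$ that outranks the designated partner $j^\star$ after $i$ is scheduled satisfies $h(k\mid\mathcal{A}_n)\le h(i\mid\mathcal{A}_n)$ by greedy optimality at round $n+1$, and $k$'s credit from \emph{uninitiated} joint detections can only decrease when $i$ is added; hence $k$'s initiating-pending part after the update is at most $h(i\mid\mathcal{A}_n)$, and everything by which $h(k\mid\mathcal{A}_{n+1})$ exceeds $h(i\mid\mathcal{A}_n)$ must be actual (independent or completing) gain. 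This comparison-with-the-previous-round bookkeeping, not the size of $\lambda$, is what forces pending to materialize; the role of $\lambda=1/(C+1)$ is only to make the numerology close (e.g., ensuring $h(j^\star\mid\mathcal{A}_{n+1})\ge h(i\mid\mathcal{A}_n)$ when (\ref{eq:one-step}) fails). Finally, the case $C=1$ requires its own, easier argument --- the unique partner's post-update utility strictly exceeds every other CoV's, which is unchanged, so it is necessarily scheduled next --- and your uniform $\Theta(1/C)$ treatment does not recover the constant $1/6$. Without these repairs the outline cannot be completed as written.
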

\begin{proof}
    See Appendix~\ref{app:local}.
\end{proof}

Even without submodularity, Lemma~\ref{lem:local} states that Algorithm~\ref{alg:greedy} guarantees a growth of a positive factor of the optimality gap in one or two steps.
Based on this lemma, we can further derive a worst-case performance guarantee for Algorithm~\ref{alg:greedy}.

\begin{theorem} \label{thm}
    Algorithm \ref{alg:greedy} achieves an approximation ratio of $1-(1-\gamma/N)^{N-1}$ for the scheduling problem, i.e.,
    \begin{align}
        g(\mathcal{A}_\mathrm{greedy}) &\ge \left[1-\left(1-\frac{\gamma}{N}\right)^{N-1}\right] g(\mathcal{A}^*) \\
        & \stackrel{N\to\infty}{\relbar\joinrel\relbar\joinrel\longrightarrow} \left(1-e^{-\gamma}\right) g(\mathcal{A}^*),
    \end{align}
    where $\gamma$ is a constant determined by $C_\mathrm{max}$ in (\ref{eq:gamma}) .
\end{theorem}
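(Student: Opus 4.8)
The plan is to track the optimality gap $\delta_n \defeq g(\mathcal{A}^*) - g(\mathcal{A}_n)$ after the $n$-th scheduling round and show that it contracts geometrically, mirroring the classical analysis of greedy submodular maximization. The starting point is Lemma~\ref{lem:local}, which I would first recast purely in terms of $\delta_n$. In the one-step case, $g(\mathcal{A}_{n+1}) - g(\mathcal{A}_n) \ge (\gamma/N)\delta_n$ rearranges to $\delta_{n+1} \le (1-\gamma/N)\delta_n$; in the two-step case, $g(\mathcal{A}_{n+2}) - g(\mathcal{A}_n) \ge (2\gamma/N)\delta_n$ gives $\delta_{n+2} \le (1-2\gamma/N)\delta_n$. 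The elementary observation that unifies the two branches is $1 - 2\gamma/N \le (1-\gamma/N)^2$, so that the two-step contraction is no weaker than two consecutive one-step contractions. Consequently, whether the lemma advances the index by one or by two, the gap shrinks by a factor of at most $(1-\gamma/N)$ per unit of index advanced.

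Building on this, I would chain the contractions along the rounds. Starting from $n_0 = 0$, I repeatedly invoke Lemma~\ref{lem:local} at the current index (legitimate as long as it lies in $\{0,\dots,N-2\}$), advancing to $n_{k+1} = n_k + 1$ or $n_k + 2$ according to which case holds. The per-unit bound above telescopes to $\delta_{n_k} \le (1-\gamma/N)^{n_k}\delta_0$, since the accumulated exponent equals the index reached. I stop the chain at the first $k$ with $n_k \ge N-1$; because each advance is at least one, such a $k$ exists, and the terminal index is either exactly $N-1$ (from a one-step at $N-2$ or a two-step at $N-3$) or, by a two-step overshoot from $N-2$, equal to $N$. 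In both outcomes the exponent is at least $N-1$, yielding $\delta_{n_k} \le (1-\gamma/N)^{N-1}\delta_0$.

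It remains to transfer this bound to the final output $\mathcal{A}_\mathrm{greedy} = \mathcal{A}_N$ and to initialize the recursion. When the chain terminates at $N-1$, the last greedy round $N-1 \to N$ falls outside the scope of Lemma~\ref{lem:local}, so I would invoke the monotonicity noted after Lemma~\ref{lem:alg1-calc} (the utility $g$ is non-decreasing along the greedy trajectory) to conclude $\delta_N \le \delta_{N-1} \le (1-\gamma/N)^{N-1}\delta_0$; when the chain overshoots to $N$, the exponent $N \ge N-1$ already gives the same bound. Since $\mathcal{A}_0 = \emptyset$ detects nothing, $g(\mathcal{A}_0)=0$ and hence $\delta_0 = g(\mathcal{A}^*)$. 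Substituting yields $g(\mathcal{A}^*) - g(\mathcal{A}_\mathrm{greedy}) \le (1-\gamma/N)^{N-1} g(\mathcal{A}^*)$, which rearranges to the claimed ratio, and the limit follows from $(1-\gamma/N)^{N-1} \to e^{-\gamma}$. The main obstacle is not any single inequality but the careful bookkeeping at the right boundary: the fact that Lemma~\ref{lem:local} is guaranteed only for $n \le N-2$ is precisely what forces the exponent $N-1$ rather than $N$, and the two-step branch must be handled so that an overshoot past the budget is absorbed by monotonicity rather than silently assumed to remain in range.
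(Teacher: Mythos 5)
Your proof is correct and follows essentially the same route as the paper's: both chain the one-step/two-step improvements of Lemma~\ref{lem:local} into a geometric contraction of the optimality gap, absorb the two-step case via $1-2\gamma/N \le (1-\gamma/N)^2$, use monotonicity of $g$ along the greedy trajectory to handle the right boundary, and obtain the exponent $N-1$ because the lemma is only guaranteed up to round $N-2$. The only difference is presentational --- you telescope the gap $\delta_n = g(\mathcal{A}^*)-g(\mathcal{A}_n)$ along a forward chain of indices advancing by one or two, whereas the paper runs an induction over every round with the two-pronged invariant (\ref{eq:former-cond})/(\ref{eq:latter-cond}) tracking both $g(\mathcal{A}_n)$ and $g(\mathcal{A}_{n-1})$; the substance is identical.
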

\begin{proof} 
    See Appendix~\ref{app:thm}.
\end{proof}
% Refer to Theorem 3.9 in ref. \cite{approx-alg}.

Theorem~\ref{thm} provides a non-trivial approximation ratio for the hybrid greedy algorithm. 
While the proposed algorithm admits $\gamma \to 0$ as $C\to\infty$, it can be shown that, with a finite look-ahead step $L$, any greedy algorithm based on a weighted average of the actual utility and the pending utility has zero approximation ratio as $C\to\infty$.
We can see this by two examples.

\textbf{Example 3}: There are $N$ groups of unit-weight objects and unit-cost CoVs. In each group, there are $C$ objects, each of which is detectable jointly by a common CoV and a particular CoV that observes this object only.
If $\lambda > 2/C$, the hybrid greedy algorithm with any finite look-ahead step $L$ selects the $N$ common CoVs from distinct groups, because each of the common CoV incurs a hybrid utility greater than $1$.
Eventually, the resulting actual utility is 0.

\textbf{Example 4}: There are two groups, each of which has $C$ unit-cost CoVs.
In the first group, the CoVs are sub-grouped by size $L$.
Within each subgroup, any pair of CoVs jointly detects an object of weight $\sqrt{C}$.
In the second group of size $C$, any pair of CoVs jointly detects an object of weight $1$.
Within a budget of $C$, if $\lambda \le 2/C$, the $L$-step hybrid greedy algorithm schedules a subgroup from the first group once in a round, receiving $O(C\sqrt{C)}$ perception utility.
However, the optimal solution is to schedule the second group, which results in a $O(C^2)$ perception utility.
Therefore, the approximation ratio approaches $0$ as $C\to\infty$.

\subsection{Balancing Exploration and Exploitation}
Due to the mobility of vehicles and pedestrians, the perception topology is gradually changing over time.
Therefore, the user needs to learn the perception topology from historical data and balance between exploration and exploitation.
In ref.~\cite{mass}, we modeled the collaboration utility of each CoV as an independent Gaussian random walk with reflecting boundaries.
Then the user determines an upper confidence bound (UCB) of the collaboration utility for each CoV, expressed by $\Tilde{g}_i(t) = \Hat{g}_i(t) + \beta \sqrt{t-\tau_i}$.
The size of the confidence bound is proportional to the square root of its unexplored time since the CoV is last scheduled at $\tau_i$.
We have proved that scheduling the CoV with the best UCB of utility achieves near-optimal performance, which means the regret matches the lower bound up to a logarithmic factor.
Intuitively, the optimism towards collaboration utility can also be interpreted as a reward for fresh knowledge of the perception topology.

\begin{figure}[!t]
    \renewcommand{\algorithmicrequire}{\textbf{Parameters:}}
    \begin{algorithm}[H]
    \begin{algorithmic}[1]
    	\REQUIRE $\alpha, \beta$ 
    	\FOR {$t=1,\cdots,T$}
    	    \STATE Determine the available CoV set $\mathcal{V}_t$.
    	    \FOR {any CoV $v\in\mathcal{V}_t$ that has never been scheduled}
                    \STATE $\mathcal{A}_t \gets \mathcal{A}_t \cup \{v\}$.
                    \STATE $W \gets W - B_{v}$.
    	    \ENDFOR
            \STATE Execute Algorithm~\ref{alg:greedy} with $\Hat{\mathcal{P}}_t^{(1)}, \Hat{\mathcal{P}}_t^{(2)}$, replacing the marginal utilities $g(i|\mathcal{A}_t)$ and $g^+(i|\mathcal{A}_t)$ with $\Tilde{g}_t(i|\mathcal{A}_t)$ and $\Tilde{g}^+_t(i|\mathcal{A}_t)$ calculated by (\ref{eq:explore-marginal-actual}) and (\ref{eq:explore-marginal-pending}).
            \STATE Send CP requests to CoVs $\mathcal{A}_t$ for sensor data.
            \STATE Receive and fuse the sensor data for detection.
            \STATE Replay detections to update the perception topology $\Hat{\mathcal{P}}_{t+1}^{(1)}, \Hat{\mathcal{P}}_{t+1}^{(2)}$ by (\ref{eq:replay-first}) and (\ref{eq:replay-second}).
            \STATE Predict the LoS relationship $\Hat{\mathcal{L}}_{t+1}(i)$, refine $\Hat{\mathcal{P}}_{t+1}^{(1)}$, $\Hat{\mathcal{P}}_{t+1}^{(2)}$ by (\ref{eq:confine-first}) and (\ref{eq:confine-second}), and update $\mathcal{U}_{t+1}(i)$ by (\ref{eq:update-topo-unc}).
        \ENDFOR
    \end{algorithmic}
    \caption{The C-MASS Scheduling Algorithm}
    \label{alg:cmass}
    \end{algorithm}
\end{figure}

Extending the MASS framework~\cite{mass} to scheduling a combination of CoVs, we propose the C-MASS scheduling algorithm, as described in Algorithm~\ref{alg:cmass}.
When a CoV available for sensor sharing comes into the communication range of the user, it is immediately scheduled once to gain knowledge of its perception topology.
With the remaining bandwidth, Algorithm~\ref{alg:greedy} is executed with the empirical perception topology $\Hat{\mathcal{P}}_t^{(1)}$ and $\Hat{\mathcal{P}}_t^{(2)}$.
We propose to add two extra terms to the marginal CP utilities, i.e.,
\begin{align}
    \Tilde{g}_t(i|\mathcal{A}) = \Hat{g}_t(i|\mathcal{A}) + \alpha \sum_{n \in \mathcal{U}_t(i)} w_n + \beta \sqrt{t-\tau_i}, \label{eq:explore-marginal-actual}\\
    \Tilde{g}^+_t(i|\mathcal{A}) = \Hat{g}^+_t(i|\mathcal{A}) + \alpha \sum_{n \in \mathcal{U}_t(i)} w_n + \beta \sqrt{t-\tau_i}, \label{eq:explore-marginal-pending}
\end{align}
where $\alpha$ and $\beta$ are weighting constants that balance exploration and exploitation.
The first extra term reflects the topological uncertainty from the prediction module, and the second extra term is the UCB term swelling with the unexplored time.
Note that these two terms do not harm the execution and properties of Algorithm~\ref{alg:greedy}, because they act like the utility from the first-order perception topology when calculating the marginal utilities.
Lines 8-9 are the main process of CP, performing the data exchange, data fusion, and object detection. 
Finally, by replay and prediction, Lines 10-11 prepares the empirical perception topology $\Hat{\mathcal{P}}_{t+1}^{(1)}$ and $\Hat{\mathcal{P}}_{t+1}^{(2)}$ and the topological uncertainty $\mathcal{U}_{t+1}$ for the next frame.
The complexity of the complete C-MASS framework remains $O(|\mathcal{V}|^2|\mathcal{O}|)$, with no more than $|\mathcal{V}|^2|\mathcal{O}|$ detection replays and topology refinement.

\section{Numerical Experiments} \label{sect:exp}
In this section, we evaluate the proposed C-MASS framework with numerical experiments, using simulated mobility traces and the fitted statistical detection model of feature-level CP in Sect.~\ref{sect:stat-model}.

\subsection{Simulation Setup}
First, we generate the street map and the mobility traces of vehicles and pedestrians using the microscopic traffic simulator SUMO~\cite{sumo}.
As shown in Fig.~\ref{fig:map}, the map includes an area of $4\times4$ urban blocks with a side length of 200m.
The urban blocks are surrounded by bidirectional two-lane streets for vehicles and sidewalks for pedestrians.
A certain percentage of vehicles are CoVs, which is termed the market penetration ratio (MPR).
The CoVs are equipped with an omni-directional LiDAR on the top and ready to share their sensor data.
The positions of CoVs are accessible via the broadcast of CAMs.
The rest of the vehicles and pedestrians are objects that should be constantly perceived.
In the simulated environment, there are occlusions caused by the buildings on the block corners and the other vehicles on the road. 

\begin{figure}
	\centering
	\includegraphics[width=0.72\linewidth]{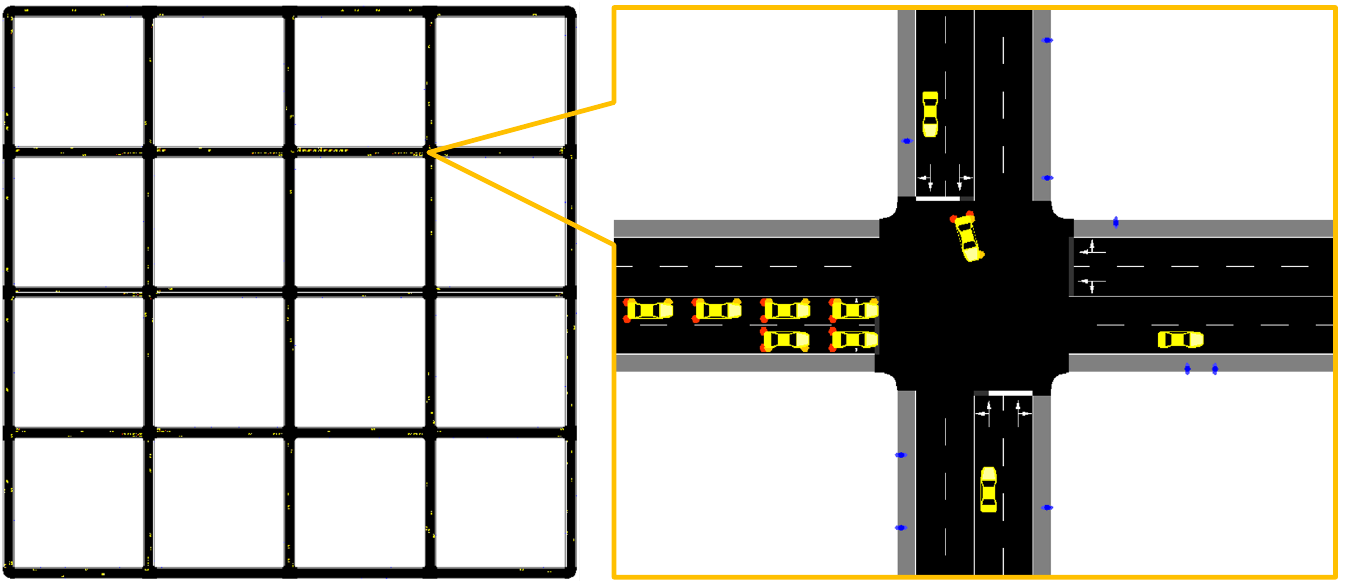}
	\caption{The street map built for SUMO simulation. The vehicles on the streets are in yellow, and the pedestrians on the sidewalks are in blue.}
	\label{fig:map}
\end{figure}

\begin{figure}
	\centering
	\subfloat[]{\includegraphics[width=0.443\linewidth]{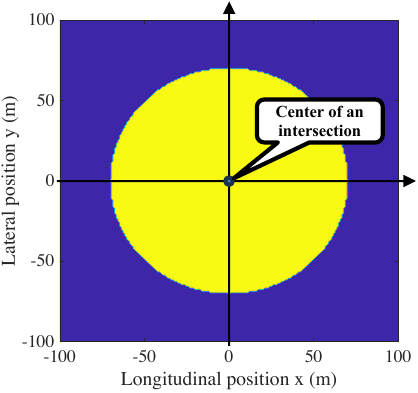}
		\label{subfig:interest-edge}}
	\hfil
	\subfloat[]{\includegraphics[width=0.49\linewidth]{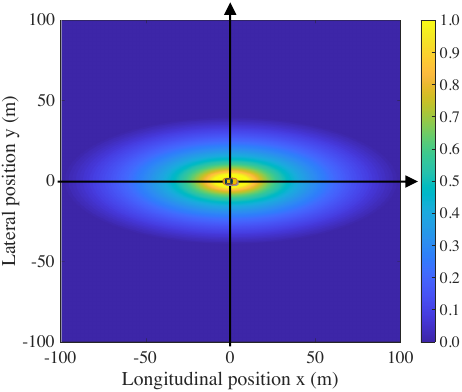}
		\label{subfig:interest-distr}}
	\caption{The interest range of the user in two different CP configurations. The color represents the importance weight of an object at the position. (a) Edge-assisted CP. (b) Distributed CP.}
	\label{fig:interest}
\end{figure}

We investigate two CP configurations.
One is edge-assisted CP, where the user is an edge server in charge of perceiving the area around one of the intersection and broadcast the perception results to the CoVs.
We focus on the data collection and fusion for process.
The interest range is a circle of radius $70\mathrm{m}$, and all objects $n$ within the interest range are equally weighted, i.e., $w_n(t) = 1$.
The other is distributed CP, where the user is one of the CoVs that perceives its local environment with the help of CP.
The interest range is a rectangular-shaped area $[\pm100\mathrm{m}, \pm40\mathrm{m}]$ centered at the user position, with the longer side towards the orienting direction. 
The importance weight of an object is calculated by
\begin{align}
    w_n(t) = \min\left\{\max\left\{-\log_{10}\sqrt{(x/100)^2+(y/40)^2}, 0\right\}, 1\right\},
\end{align}
where $x,y$ are relative longitudinal and lateral positions of the object.
The interest range in both configurations are summarized in Fig.~\ref{fig:interest}.

\begin{table}[!t]
\centering
\caption{Simulation Parameters}
\begin{tabular}{|ll|}
\hline 
\multicolumn{1}{|l|}{\textbf{Parameters}}                   & \textbf{Values}           \\ \hline\hline
\multicolumn{1}{|l|}{Frame Length}              & 0.1s                            \\ \hline
\multicolumn{1}{|l|}{Number of Vehicles}              & 200                               \\ \hline
\multicolumn{1}{|l|}{MPR of CoV}                   & 50\%                                    \\ \hline
\multicolumn{1}{|l|}{Speed Limit of Vehicles}              & 50km/h                               \\ \hline
\multicolumn{1}{|l|}{Arrival Rate of Pedestrians}            & 0.02 persons/s                 \\ \hline
\multicolumn{1}{|l|}{Speed of Pedestrians}              & 1.2m/s                 \\ \hline
\multicolumn{2}{|c|}{\textbf{Sensor-Related}}                                               \\ \hline
\multicolumn{1}{|l|}{\# of Lasers}                       & 32                       \\ \hline
\multicolumn{1}{|l|}{Vertical Field-of-view}             & 40$^{\circ}$                              \\ \hline
\multicolumn{1}{|l|}{Angular Resolution}                 & 0.1$^{\circ}$                              \\ \hline
\multicolumn{1}{|l|}{Maximum Range}                     & 100m                               \\ \hline
\multicolumn{1}{|l|}{Height of Objects}                 & 1.7m                               \\ \hline
\multicolumn{2}{|c|}{\textbf{Communication-related}} \\ \hline
\multicolumn{1}{|l|}{BEV Feature Size}         & 0.20MB for size 200m$\times$80m                            \\ \hline
\multicolumn{1}{|l|}{Max. Comm. Distance}                 & 150m    \\ \hline

\multicolumn{1}{|l|}{Carrier Frequency}                 & 5.9GHz                           \\ \hline
\multicolumn{1}{|l|}{Transmission Power}              & 23dBm                                \\ \hline
\multicolumn{1}{|l|}{Noise Power Spectral Density}     & -174dBm/Hz                              \\ \hline
\multicolumn{1}{|l|}{Receiver Noise Figure}            & 9dB                                 \\ \hline
\multicolumn{1}{|l|}{Shadowing Fading Std. Dev.}       & 3dB (LOS, NLOSv), 4dB (NLOS)         \\ \hline
\multicolumn{1}{|l|}{Vehicle Blockage Loss}            & max\{0, $\mathcal{N}(5,4)$\} dB     \\ \hline
\end{tabular}
\label{tab:param}
\end{table}

The simulation parameters are listed in Table~\ref{tab:param}.
We adopt the same parameters as the real-world setting of V2V4Real~\cite{v2v4real}, and simulate the laser scanning of a Velodyne VLP-32 LiDAR by the ray tracing method.
The detection model of V2V4Real in Sect.~\ref{sect:stat-model} is adopted, which captures important characteristics of CP including the long-tail difficulty of objects and the time-continuity between frames.
Every object is assigned a random difficulty by (\ref{eq:shift-exp}), specifying the minimum information amount required for perception.
For V2X communication, we adopt the V2X sidelink channel model in 3GPP TR 37.885~\cite{3gpp37885}, which classifies link conditions by LOS, NLOS, and NLOSv. 
Other than the conventional former two types, the NLOSv channel is caused by vehicular blockages, bringing extra attenuation for each blocking vehicle on the basis of the LOS channel.
Moreover, we consider fast fading with i.i.d. Rician distribution for LOS, NLOSv links, and Rayleigh distribution for NLOS links.
In the C-MASS framework, the prediction of the future object positions is implemented by a simple extrapolation, using the positions of the latest two detections.
We simulate $T=10,000$ time frames for both CP configurations.

\subsection{Simulation Results}
% Baselines
We compare the proposed C-MASS framework with three baseline scheduling algorithms for feature-level CP.
\textbf{Closest First} sequentially adds the closest CoV to the user into the scheduling set.
Closer CoVs generally enjoy better communication conditions and covers more interested areas.
\textbf{Greedy Area Coverage} greedily schedules the CoV with the maximum ratio of additional area coverage to bandwidth cost.
It approximates the behaviors of EdgeCooper~\cite{edgecooper} in edge-assisted CP configuration, and RAO~\cite{rao} in decentralized CP configuration.
\textbf{C-MASS (1st-order Topo.)} is a lightweight version of the C-MASS algorithm that only considers independent detections, i.e., the first-order perception topology, for utility calculation.
For reference, we also include object-level CP.
\textbf{CPM} is the current standard of CP, where each CoV independently detects objects and broadcast the results. 
Since it is communication-efficient, we assume the user always receives CPMs from all CoVs.
\textbf{Offline Optimal} solution is also provided by enumerating all feasible combinations of CoVs, which serves as the upper bound.

\begin{figure}[t]
	\centering
	\subfloat[]{\includegraphics[width=0.70\linewidth]{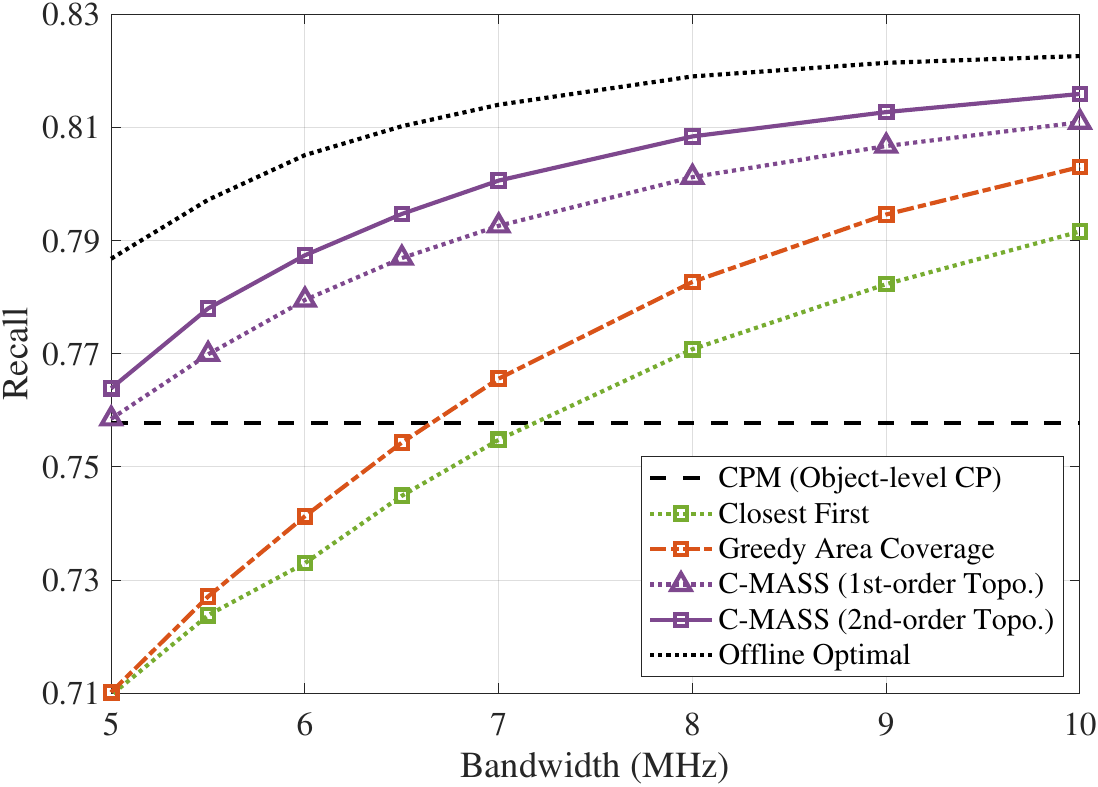}
		\label{subfig:compare-edge}}
	\hfill
	\subfloat[]{\includegraphics[width=0.70\linewidth]{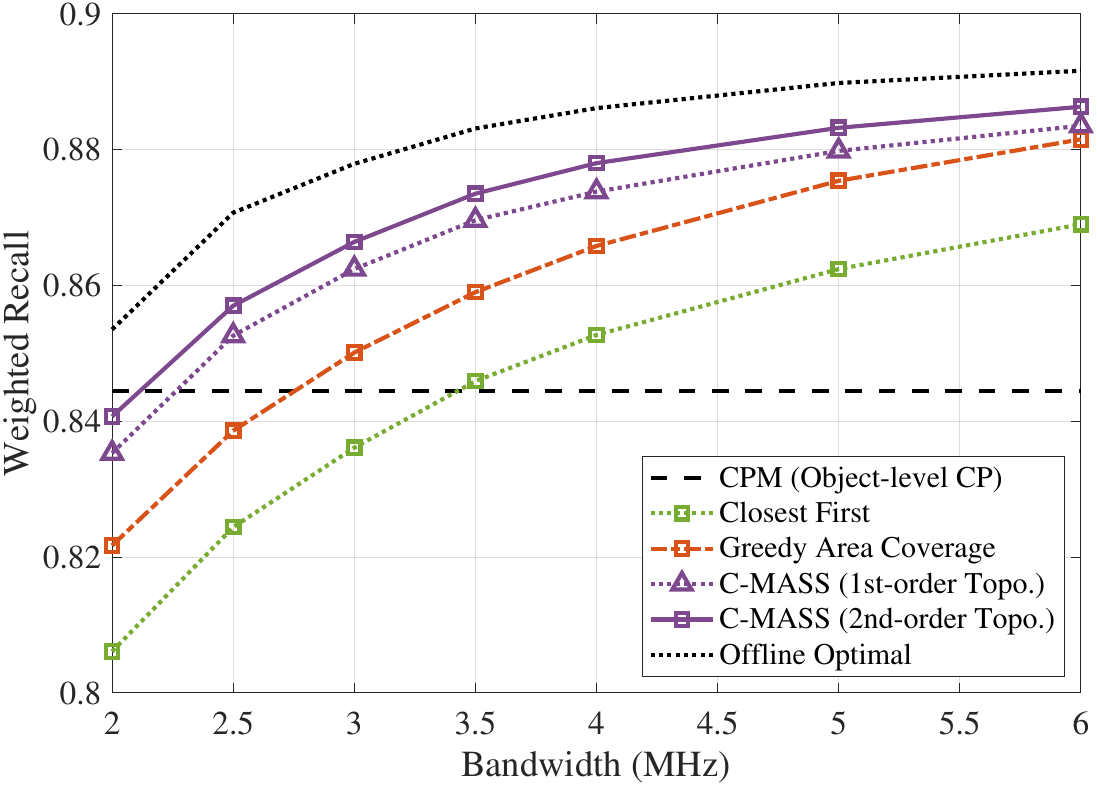}
		\label{subfig:compare-distr}}
	\caption{Performance comparison of different scheduling methods. (a) Edge-assisted CP. (b) Distributed CP.}
	\label{fig:compare}
\end{figure}

As shown in Fig.~\ref{fig:compare}, the proposed C-MASS scheduling algorithm achieves the best and near-optimal perception performance. 
Compared with the CPM standard, C-MASS achieves a significant 5.8\% higher recall and a 4.2\% higher weighted recall in the two configurations respectively.
This huge gain necessitates the feature-level CP as the next-generation CP system.
On the contrary, the performance loss of C-MASS to the offline optimal solutions is only 1.6\% and 0.9\% on average, which is mainly due to the inevitable exploration cost of the less beneficial CoVs.
Compared with Closest First and Greedy Area Coverage, the performance loss to the Offline Optimal is reduced by 74.7\%, 71.0\% in the edge-assisted configuration, and 74.0\%, 56.8\% in the distributed configuration on average, suggesting the advantage of object-oriented scheduling.
The weighted recall improvements over the first-order perception topology are 0.7\% and 0.4\% in two configurations, which are worthy gains given that the computational complexity remains unchanged.
% Add Baseline: Blind Push-based?
    % when2com: only consider sum gain, unaware of the overlap.

In Fig.~\ref{fig:MPR}, the perception performance of C-MASS with varying MPR is presented.
With enough bandwidth, 5MHz for the edge-assisted configuration and 2.5MHz for the distributed configuration, C-MASS uniformly outperforms CPM standard at any MPR value.
Moreover, increasing the MPR of CoVs from 25\% to about 75\% dramatically increases the performance of CP, because there are more diverse perspectives to collaborate with.
At high MPR values, the relatively small gap above 75\% leaves a room for hybrid CoVs and human-driven vehicles in the future intelligent transportation system.

\begin{figure}[t]
	\centering
	\subfloat[]{\includegraphics[width=0.485\linewidth]{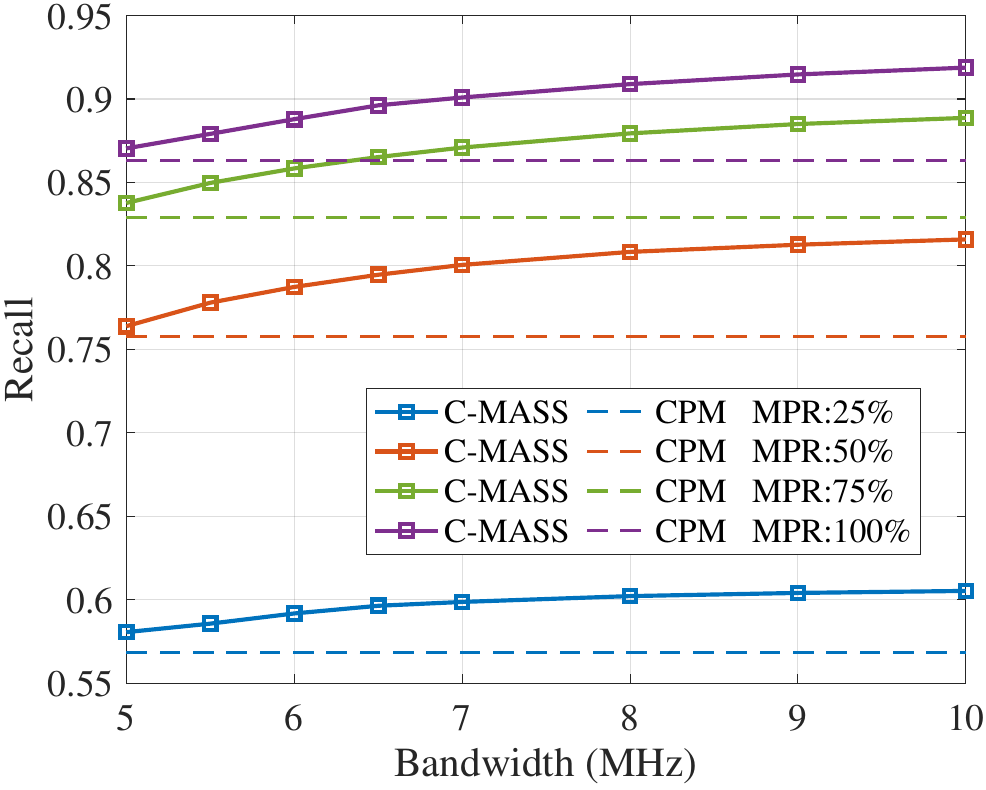}
		\label{subfig:MPR-edge}}
	\hfill
	\subfloat[]{\includegraphics[width=0.485\linewidth]{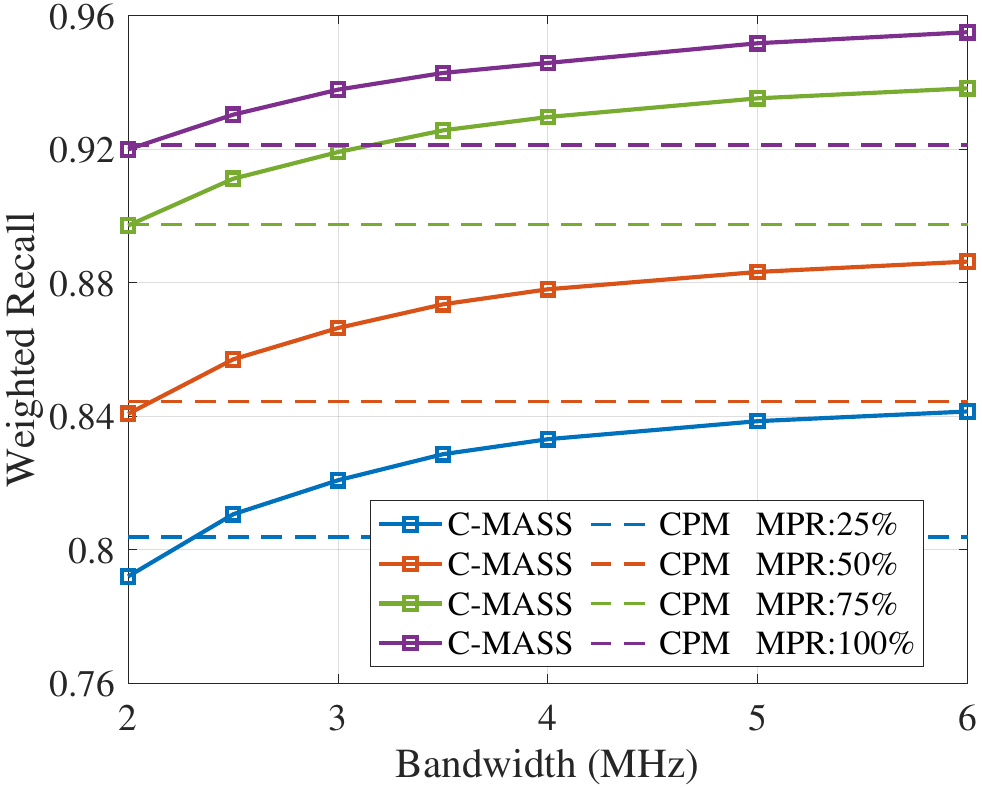}
		\label{subfig:MPR-distr}}
	\caption{Performance of C-MASS framework at different MPR values. (a) Edge-assisted CP. (b) Distributed CP.}
	\label{fig:MPR}
\end{figure}

% From the quantitative result, we can draw a conclusion on the necessity of the branch and bound algorithm. 
% When the bandwidth is abundant, one can apply the low-complexity greedy algorithm, since the optimality gap is small.
% On the contrary, if the bandwidth is scarce, one can further execute the branch and bound algorithm, because there could be larger performance gain and the search space is smaller.

\begin{table}[t]
\centering
\caption{Ablation Studies on Algorithm Features}
\begin{tabular}{ccc|c|c}
\hline
\multicolumn{3}{c|}{Algorithm Features}                                                                       & \multicolumn{2}{c}{CP Configurations} \\ \hline
\multicolumn{1}{l|}{UCB}          & \multicolumn{1}{l|}{Topo. Unc.} & \multicolumn{1}{l|}{Topo. Ref.} & Edge-assisted                                & Distributed                    \\ \hline
\multicolumn{1}{c|}{$\times$}     & \multicolumn{1}{c|}{$\times$}              & $\times$                                & 87.24                                & 80.23                              \\
\multicolumn{1}{c|}{$\checkmark$} & \multicolumn{1}{c|}{$\times$}              & $\times$                                & 87.54(+0.30)                          & 80.50(+0.27)                       \\
\multicolumn{1}{c|}{$\times$} & \multicolumn{1}{c|}{$\checkmark$}              & $\times$                                & 87.54(+0.30)                          & 80.52(+0.29)                       \\
\multicolumn{1}{c|}{$\checkmark$} & \multicolumn{1}{c|}{$\checkmark$}          & $\times$                                & 87.57(+0.33)                          & 80.55(+0.32)                       \\
\multicolumn{1}{c|}{$\checkmark$} & \multicolumn{1}{c|}{$\checkmark$}          & $\checkmark$                            & \textbf{87.80(+0.56)}                 & \textbf{80.84(+0.61)}              \\ \hline
\end{tabular}
\label{tab:ablation}
\end{table}

We have conducted ablation studies on the algorithmic features of C-MASS, as shown in Table~\ref{tab:ablation}.
Both the UCB and the topological uncertainty play the role of encouraging exploration, each resulting in an approximately 0.3\% weighted recall improvement, and a modest extra gain when applied together.
The topology refinement mechanism offers the remaining 0.3\% improvement by a best-effort correction of the empirical perception topology.
Furthermore, we find that the algorithm is robust to the values of exploration parameters $\alpha$ and $\beta$, as long as they are positive.
When sweeping $\alpha, \beta$ in the logarithmic space $[10^{-3}, 10^{-1}]$, the recall drops no more than 0.1\% at 50\% MPR in the edge-assisted configuration with 8MHz.
Similarly, the weighted recall drops no more than 0.2\% in the distributed configuration at 50\% MPR with 4MHz.
The fluctuation is relatively small compared to the benefit brought by these exploration techniques.

\section{Conclusion} \label{sect:conclusion}
In this paper, we propose C-MASS, a feature-level CP framework that enables object-oriented scheduling of CoVs under a bandwidth constraint.
The framework requires replay and prediction to maintain an empirical second-order perception topology, which is sufficient to describe the perceptual state of the CP system. 
To solve the variant of the BMC problem with the second-order perception topology approximation, we propose a hybrid greedy scheduling algorithm, with low complexity and a positive worst-case approximation ratio.
The C-MASS scheduling algorithm is adapted from the hybrid greedy algorithm, with additional features to balance exploration and exploitation. 
We have evaluated the C-MASS framework on a realistic SUMO trace using a fitted statistical detection model.
The numerical result demonstrates that, compared to the CPM standard, C-MASS achieves 5.8\% higher recall and 4.2\% higher weighted recall in two CP configurations, which is a notable gain of 3D object detection in the context of automated driving.
The superiority of C-MASS over baseline algorithms originates from its object-oriented nature and effective trade-off between exploration and exploitation.

\appendices
\section{Proof of Lemma \ref{lem:submodular}} \label{app:submodular}
By definition, we need to prove 
    \begin{align} \label{eq:submodular-pending}
        g_t^+(\mathcal{S}\cup\{i\}) - g_t^+(\mathcal{S}) \ge g_t^+(\mathcal{T}\cup\{i\}) - g_t^+(\mathcal{T}),
    \end{align}
    for $\forall \mathcal{S} \subseteq \mathcal{T} \subset \mathcal{V}_t$ and $i \in \mathcal{V}_t \setminus \mathcal{T}$.
    Since the pending utility is consisted of contributions from all objects, we consider the contribution of each object separately, by assuming $\mathcal{O}_t = \{n\}$.
    
    If the object $n\in\mathcal{P}_t(\mathcal{T})$, then $n \in \mathcal{P}_t(\mathcal{T} \cup \{i\})$. 
    Therefore, $g_t^+(\mathcal{S}\cup\{i\}) - g_t^+(\mathcal{S}) \ge g_t^+(\mathcal{T}\cup\{i\}) - g_t^+(\mathcal{T}) = w_n - w_n = 0$, which satisfies (\ref{eq:submodular-pending}).
    Otherwise, $n\notin\mathcal{P}_t(\mathcal{T})$ and $n\notin\mathcal{P}_t(\mathcal{S})$.
    
    If $n \in \mathcal{P}_t(\mathcal{S}\cup\{i\})$, then 
    \begin{align*}
        &[g_t^+(\mathcal{S}\cup\{i\}) - g_t^+(\mathcal{S})] - [g_t^+(\mathcal{T}\cup\{i\}) - g_t^+(\mathcal{T})] \\ 
        \ge &[w_n - g_t^+(\mathcal{T}\cup\{i\})] + [g_t^+(\mathcal{T}) - g_t^+(\mathcal{S})] \ge 0.
    \end{align*}
    
    If $n \notin \mathcal{P}_t(\mathcal{S}\cup\{i\})$, then $n \notin \mathcal{P}_t(\mathcal{T}\cup\{i\})$, which means $i$ cannot jointly detect $n$ with $\forall k \in \mathcal{T}$.
    Therefore,
    \begin{align*}
        &g_t^+(\mathcal{S}\cup\{i\}) - g_t^+(\mathcal{S}) \\ 
        % =\ &w_n \left[ \max_{\substack{i\in\mathcal{S}\cup\{i\} \\ j\in \mathcal{T}^C\setminus\{i\}}}\mathds{1}\left\{n\in\mathcal{P}^{(2)}_t(i,j)\right\} \tfrac{B_i(t)}{B_i(t)+B_j(t)} \right. \\
        % &- \left. \max_{\substack{i\in\mathcal{S} \\ j\in \mathcal{T}^C\setminus\{i\}}}\mathds{1}\left\{n\in\mathcal{P}^{(2)}_t(i,j)\right\} \tfrac{B_i(t)}{B_i(t)+B_j(t)} \right] \\
        =\ &w_n \max \left\{ \max_{k\in \mathcal{T}^C\setminus\{i\}}\mathds{1}\left\{n\in\mathcal{P}^{(2)}_t(i,k)\right\} \tfrac{B_i(t)}{B_i(t)+B_k(t)} \right. \\
        &- \left. \max_{\substack{j\in\mathcal{S} \\ k\in \mathcal{T}^C\setminus\{i\}}}\mathds{1}\left\{n\in\mathcal{P}^{(2)}_t(j,k)\right\} \tfrac{B_j(t)}{B_j(t)+B_k(t)} ,\ 0 \right\}.
    \end{align*}
    Similarly,
    \begin{align*}
        &g_t^+(\mathcal{T}\cup\{i\}) - g_t^+(\mathcal{T}) \\ 
        =\ &w_n \max \left\{ \max_{k\in \mathcal{T}^C\setminus\{i\}}\mathds{1}\left\{n\in\mathcal{P}^{(2)}_t(i,k)\right\} \tfrac{B_i(t)}{B_i(t)+B_k(t)} \right. \\
        &- \left. \max_{\substack{j\in\mathcal{T} \\ k\in \mathcal{T}^C\setminus\{i\}}}\mathds{1}\left\{n\in\mathcal{P}^{(2)}_t(j,k)\right\} \tfrac{B_j(t)}{B_j(t)+B_k(t)} , \ 0 \right\}.
    \end{align*}
    Because $\mathcal{S} \subseteq \mathcal{T}$, (\ref{eq:submodular-pending}) is satisfied and the proof is finished.

\section{Proof of Lemma \ref{lem:alg1-calc}} \label{app:alg1-calc}
    a) It can be observe that $\mathbf{d}$ and $\mathbf{P}$ do not decrease when they are updated by (\ref{eq:update-level}) and (\ref{eq:update-matrix}).
    
    If an object $n$ satisfies $\exists i \in \mathcal{A},\ n \in \mathcal{P}^{(1)}(i)$, then $\mathbf{P}_{i,n} = 1$ as initialized, and thus $\mathbf{d}_n \gets 1$ when $i$ is scheduled.

    If an object $n$ satisfies $\exists i,j \in \mathcal{A}, \ n \in \mathcal{P}^{(2)}(i,j)$, w.l.o.g. assume $i$ is scheduled later than $j$.
    When $j$ is scheduled, $\mathbf{P}_{i,n} \gets 1$ because $n \in \mathcal{P}^{(2)}(i,j)$. 
    Then $\mathbf{d}_n \gets 1$ when $i$ is scheduled.

    Otherwise, the object $n$ satisfies $\forall i \in \mathcal{A}, n\notin\mathcal{P}^{(1)}(i)$, and $\forall i,j \in \mathcal{A}, \ n \notin \mathcal{P}^{(2)}(i,j)$.
    Then for $\forall i \in \mathcal{A}$, the value of $\mathbf{P}_{i,n}$ remains the same as initialized when the auxiliary matrix is updated.
    Therefore, 
    \begin{align*}
        \mathbf{d}_n = \mathbf{P}_{i,n} = \max_{i\in\mathcal{A}, j\in \mathcal{A}^C}\mathds{1}\left\{n\in\mathcal{P}^{(2)}_t(i,j)\right\} \tfrac{B_i}{B_i+B_j}. 
    \end{align*}
    
    b) Again, the contribution of each object $n$ is considered separately, by assuming $\mathcal{O}_t = \{n\}$. We will compare the results by calculations (\ref{eq:calc-marginal-immediate}), (\ref{eq:calc-marginal-pending}) and the definitions (\ref{eq:marginal-actual}), (\ref{eq:marginal-pending}).
    
    If $n\in\mathcal{P}(\mathcal{A})$, by calculation, $\mathbf{d}_n=1$, $\max\{\mathbf{P}_{i,n} - \mathbf{d}_n,\ 0\} = \max\{\lfloor \mathbf{P}_{i,n} \rfloor - \lfloor\mathbf{d}_n\rfloor,\ 0\} = 0$. 
    By definition, object $n$ contributes $w_n$ to both $g(A)$ and $g(A\cup\{i\})$, and it has no other contribution to $g^+(A)$ and $g^+(A\cup\{i\})$. 
    Therefore, $g(i|\mathcal{A}) = g^+(i|\mathcal{A}) = 0$ in both ways.
    
    If $n\in\mathcal{P}(\mathcal{A}\cup\{i\})$ and $n \notin \mathcal{P}(\mathcal{A})$, by calculation, $\mathbf{P}_{i,n} = 1$, and $\mathbf{d}_n = \max_{j\in\mathcal{A}, k\in \mathcal{A}^C}\mathds{1}\left\{n\in\mathcal{P}^{(2)}_t(j,k)\right\} \tfrac{B_j}{B_j+B_k}$.
    By definition, object $n$ contributes $0$ to $g(A)$, $w_n \max_{j\in\mathcal{A}, k\in \mathcal{A}^C}\mathds{1}\left\{n\in\mathcal{P}^{(2)}_t(j,k)\right\} \tfrac{B_j}{B_j+B_k}$ to $g^+(A)$, and $w_n$ to both $g^+(A\cup\{i\})$ and $g^+(A\cup\{i\})$.
    Therefore, $g(i|\mathcal{A}) = w_n$, and $g^+(i|\mathcal{A}) = w_n (1- \max_{j\in\mathcal{A}, k\in \mathcal{A}^C}\mathds{1}\left\{n\in\mathcal{P}^{(2)}_t(j,k)\right\} \tfrac{B_j}{B_j+B_k})$ in both ways.

    If $n\notin\mathcal{P}(\mathcal{A}\cup\{i\})$,
    \begin{align*}
        \mathbf{P}_{i,n} &= \max_{k\in \mathcal{A}^C\setminus\{i\}}\mathds{1}\left\{n\in\mathcal{P}^{(2)}(i,k)\right\} \tfrac{B_i}{B_i+B_k},
    \end{align*}
    and $\mathbf{d}_n = \max_{j\in\mathcal{A}, k\in \mathcal{A}^C}\mathds{1}\left\{n\in\mathcal{P}^{(2)}_t(j,k)\right\} \tfrac{B_j}{B_j+B_k}$.
    By definition, object $n$ contributes 0 to both $g(A)$ and $g(A\cup\{i\})$, $w_n \max_{j\in\mathcal{A}, k\in \mathcal{A}^C}\mathds{1}\left\{n\in\mathcal{P}^{(2)}_t(j,k)\right\} \tfrac{B_j}{B_j+B_k}$ to $g^+(A)$, and $w_n \max_{j\in\mathcal{A}\cup\{i\}, k\in \mathcal{A}^C\setminus\{i\}}\mathds{1}\left\{n\in\mathcal{P}^{(2)}_t(j,k)\right\} \tfrac{B_j}{B_j+B_k}$ to $g^+(A\cup\{i\})$.
    It is easy to see $g(i|\mathcal{A}) = 0$ in both ways.
    On the other hand, since $i$ cannot jointly detect $n$ with $\forall j \in \mathcal{A}$, 
    \begin{align*}
        g^+(i|\mathcal{A}) &= w_n \left[ \max_{j\in\mathcal{A}\cup\{i\}, k\in \mathcal{A}^C\setminus\{i\}} \mathds{1}\left\{n\in\mathcal{P}^{(2)}_t(j,k)\right\} \tfrac{B_j}{B_j+B_k} \right. \notag \\
        &\quad \left. - \max_{j\in\mathcal{A}, k\in \mathcal{A}^C\setminus\{i\}} \mathds{1}\left\{n\in\mathcal{P}^{(2)}_t(j,k)\right\} \tfrac{B_j}{B_j+B_k} \right] \notag \\
        &= w_n \max\left\{\left[ \max_{k\in \mathcal{A}^C\setminus\{i\}} \mathds{1}\left\{n\in\mathcal{P}^{(2)}_t(i,k)\right\} \tfrac{B_i}{B_i+B_k} \right.  \right. \notag \\
        &\quad \left. \left. - \max_{j\in\mathcal{A}, k\in \mathcal{A}^C\setminus\{i\}} \mathds{1}\left\{n\in\mathcal{P}^{(2)}_t(j,k)\right\} \tfrac{B_j}{B_j+B_k} \right] ,\ 0 \right\} \notag  \\
        &= w_n \max\{\mathbf{P}_{i,n} - \mathbf{d}_n,\ 0\}.
    \end{align*}
    To conclude, each object $n$ has equal contributions to the calculations and the definitions of the marginal utilities.

\section{Proof of Lemma \ref{lem:local}} \label{app:local}
    At iteration $n$, at least $g(OPT) - g(\mathcal{A}_n)$ worth actual utility of objects not fully detected by $S_i$ are detected by the $N$ CoVs in $OPT$.
    Therefore, by the pigeonhole principle, one of the $N$ CoVs in $OPT$ must independently or jointly detect at least $(g(OPT) - g(\mathcal{A}_n))/N$ worth of objects.
    
    If there is no joint detection, i.e., $C = 0$, then $g(\mathcal{S}) = g^+(\mathcal{S}) = h(\mathcal{S}), \forall \mathcal{S} \subseteq \mathcal{V}$.
    That CoV has an actual utility of at least $(g(OPT) - g(\mathcal{A}_n))/N$.
    Then the greedy algorithm must schedule a CoV with at least $\frac{1}{N}(g(OPT) - g(\mathcal{A}_n))$.
    
    When there are non-trivial second-order perception topology, i.e., $C \ge 1$, then $\lambda = \frac{1}{C+1} \in (0, 1/2]$. That CoV should have a hybrid utility of at least $\frac{\lambda}{2N} (g(OPT) - g(\mathcal{A}_n))$.
    Since the algorithm schedules the CoV $i$ with maximum hybrid utility, the scheduled CoV must satisfy $h(i|\mathcal{A}_n) \ge \frac{\lambda}{2N} (g(OPT) - g(\mathcal{A}_n))$.
    We decompose the hybrid utility of the scheduled CoV $i$ by  
    \begin{align*}
        h(i|\mathcal{A}_n) = X + (\lambda/2)Y + (1-\lambda/2) Z,
    \end{align*}
    where $X$ is from independent detections of $i$, $(\lambda/2)Y$ is from initiating joint detections of objects with actual utility $Y$, and $(1-\lambda/2)Z$ is from completing joint detections of objects with actual utility $Z$.
    
    Let $X + Z = \mu h $, the one-step actual utility satisfies (\ref{eq:one-step}) if $\lambda \mu/2 \ge \gamma$.
    Otherwise, $\lambda \mu/2 < \gamma$, and denote $h = h(i|\mathcal{A}_n)$ for simplicity.
    
    (a) If $C = 1$, $\lambda=1/2$, $\gamma=1/6$, then $\mu < 2/3$.
    After scheduling $i$, the only second-order collaborator of $i$ should have at least a hybrid utility of
    $( 1- \lambda/2) Y > ( 1- \lambda/2) (\lambda/2)^{-1} (1-\mu)h > h$, while the hybrid utilities of others are not increased.
    Therefore, it must be scheduled next.
    The total actual utility of the two steps is $X+Y+Z > \mu h + (\lambda/2)^{-1} (1-\mu)h > 2h = \frac{1}{2N} (g(OPT) - g(\mathcal{A}_n))$, which satisfies (\ref{eq:two-step}).

    (b) If $C \ge 2$, $\lambda = \frac{1}{C+1}$, $\gamma=\frac{1}{6C+2}$, then $\mu \le \frac{C+1}{3C+1}$.
    By the pigeonhole principle, there exists a second-order collaborator $j$ with at least a hybrid utility of $ h(j|\mathcal{A}_n\cup\{i\}) = \frac{1}{C} Y = \frac{1}{C}(1-\lambda/2) (\lambda/2)^{-1} (1-\mu) h$ from the pending second-order detection.
    Since $ h(j|\mathcal{A}_n\cup\{i\}) \ge h$, CoV $j$ will be scheduled next if there is no other collaborating CoV with a higher hybrid utility.
    If it is scheduled, it provides an actual utility of $\frac{1}{C} (\lambda/2)^{-1} (1-\mu) h$.
    The two-step utility is $\frac{4(c+1)}{3c+1}$, which satisfies (\ref{eq:two-step}).

    On the other hand, assume another collaborating CoV $k$ has a higher hybrid utility than $j$ after scheduling $i$. 
    We decompose the hybrid utility of CoV $k$ before scheduling $i$ by
    \begin{align}
        h(k|\mathcal{A}_n) &= R + (\lambda/2)T+(1-\lambda/2)T_0 \le h(i|\mathcal{A}_n), \label{eq:part-hybrid} 
    \end{align}
    where $R$ is from independent detections of $k$, $(\lambda/2)T$ is from initiating joint detections of objects with actual utility $T$, and $(1-\lambda/2)T_0$ is from completing joint detections of objects with actual utility $T_0$.
    Similarly, decompose the hybrid utility after scheduling $i$ by
    \begin{align*}
        h(k|\mathcal{A}_n\cup\{i\}) &= R'+(\lambda/2)T'+(1-\lambda/2)T'_0 \\
        &\ge h(j|\mathcal{A}_n\cup\{i\}),
    \end{align*}
    where $R',T',T'_0$ denote the corresponding utilities respectively after scheduling $i$.
    Since the uninitialized joint detections of $k$ can only decrease after scheduling $i$, by (\ref{eq:part-hybrid}),
    \begin{align*}
        (\lambda/2)T' \le (\lambda/2)T \le h.
    \end{align*}
    Therefore, the two-step actual utility is at least
    \begin{align*}
        g(\mathcal{A}_{n+2}) - g(\mathcal{A}_n) &= X+Z+R'+T'_0 \\
        &\ge \mu h + R'+(1-\lambda/2)T'_0\\
        &\ge \mu h + h(j|\mathcal{A}_n\cup\{i\}) - h \\
        &\ge \frac{2(c+1)}{3c+1}.
    \end{align*}
    which satisfies (\ref{eq:two-step}). 

\section{Proof of Theorem \ref{thm}} \label{app:thm}
    It suffices to prove that either 
    \begin{align} \label{eq:former-cond}
        g(\mathcal{A}_n) \ge g(\mathcal{A}_{n-1}) \ge \left[1-\left(1-\frac{\gamma}{N}\right)^{n-1}\right] g(\mathcal{A}^*),
    \end{align}
    or
    \begin{align} \label{eq:latter-cond}
    \begin{cases}
        g(\mathcal{A}_n) \ge \left[1-\left(1-\frac{\gamma}{N}\right)^{n}\right] g(\mathcal{A}^*) \\
        g(\mathcal{A}_{n-1}) \ge \left[1-\left(1-\frac{\gamma}{N}\right)^{n-2}\right] g(\mathcal{A}^*),
    \end{cases}
    \end{align}
    where $n$ is the round index, $n=2,\cdots,N$ and $\mathcal{A}_N = \mathcal{A}_\mathrm{greedy}$.
    We proceed by induction on the round index $n$.
    
    For $n=2$, by Lemma~\ref{lem:local}, either $g(\mathcal{A}_1) \ge g(\mathcal{A}_0) + \frac{\gamma}{N} (g(\mathcal{A}^*) - g(\mathcal{A}_0)) = \frac{\gamma}{N} g(\mathcal{A}^*)$ or $g(\mathcal{A}_2) \ge g(\mathcal{A}_0) + \frac{2\gamma}{N} (g(\mathcal{A}^*) - g(\mathcal{A}_0)) = \frac{2\gamma}{N} g(\mathcal{A}^*)$.
    The former means $g(\mathcal{A}_2) \ge g(\mathcal{A}_1) \ge \frac{\gamma}{N} g(\mathcal{A}^*)$, which satisfies (\ref{eq:former-cond}).
    The latter means $g(\mathcal{A}_2) \ge \frac{2\gamma}{N} g(\mathcal{A}^*)$ and $g(\mathcal{A}_1) \ge 0$, which satisfies (\ref{eq:latter-cond}).
    
    If (\ref{eq:former-cond}) is satisfied for $n-1$, 
    \begin{align*}
        g(\mathcal{A}_{n-1}) \ge g(\mathcal{A}_{n-2}) \ge \left[1-\left(1-\frac{\gamma}{N}\right)^{n-2}\right] g(\mathcal{A}^*),
    \end{align*}
    By Lemma~\ref{lem:local}, if the first condition $g(\mathcal{A}_{n-1}) - g(\mathcal{A}_{n-2}) \ge \frac{\gamma}{N}(g(\mathcal{A}^*) - g(\mathcal{A}_{n-1}))$ is hold, we arrive at (\ref{eq:former-cond}) for $n$.
    If the second condition $g(\mathcal{A}_{n}) - g(\mathcal{A}_{n-2}) \ge \frac{2\gamma}{N}(g(\mathcal{A}^*) - g(\mathcal{A}_{n-2}))$ is hold, we arrive at (\ref{eq:latter-cond}) for $n$.

    If (\ref{eq:latter-cond}) is satisfied for $n-1$, i.e., 
    \begin{align*}
        \begin{cases}
        g(\mathcal{A}_{n-1}) \ge \left[1-\left(1-\frac{\gamma}{N}\right)^{n-1}\right] g(\mathcal{A}^*) \\
        g(\mathcal{A}_{n-2}) \ge \left[1-\left(1-\frac{\gamma}{N}\right)^{n-3}\right] g(\mathcal{A}^*),
    \end{cases}
    \end{align*}
    then (\ref{eq:former-cond}) is naturally satisfied for $n$.

    To conclude, if either (\ref{eq:former-cond}) or (\ref{eq:latter-cond}) is satisfied for $n-1$, then either (\ref{eq:former-cond}) or (\ref{eq:latter-cond}) is satisfied for $n$.
    Since Lemma~\ref{lem:local} applies to $n \in \{0,1, \cdots,N-2\}$, 
    we can get 
    \begin{align*}
        g(\mathcal{A}_N) \ge \left[1-\left(1-\frac{\gamma}{N}\right)^{N-1}\right] g(\mathcal{A}^*).
    \end{align*}
    When $N$ is large, 
    \begin{align*}
        &\quad \lim_{N\to\infty}\left[1-\left(1-\frac{\gamma}{N}\right)^{N-1}\right] \\
        &= 1- \lim_{N\to\infty} \left(1-\frac{\gamma}{N}\right)^{N} \lim_{N\to\infty}\left(1-\frac{\gamma}{N}\right)^{-1} \\
        &= 1-e^{-\gamma}.
    \end{align*}

%\appendix[Proof of the Zonklar Equations]
% or
%\appendix  % for no appendix heading
% do not use \section anymore after \appendix, only \section*
% is possibly needed

% use appendices with more than one appendix
% then use \section to start each appendix
% you must declare a \section before using any
% \subsection or using \label (\appendices by itself
% starts a section numbered zero.)
%

% \label{app:lemmas}
% Appendix one text goes here.

% % you can choose not to have a title for an appendix
% % if you want by leaving the argument blank
% \section{}
% Appendix two text goes here.

% use section* for acknowledgment
% \section*{Acknowledgment}
% The authors would like to thank...

% Can use something like this to put references on a page
% by themselves when using endfloat and the captionsoff option.
\ifCLASSOPTIONcaptionsoff
  \newpage
\fi


\begin{thebibliography}{99}
\bibitem{multi-modal}
L. Wang \emph{et al.}, ``Multi-Modal 3D Object Detection in Autonomous Driving: A Survey and Taxonomy," in \emph{IEEE Trans. Intell. Vehicles}, vol. 8, no. 7, pp. 3781-3798, July 2023.
\bibitem{CP-survey-MITS}
Y. Han, H. Zhang, H. Li, Y. Jin, C. Lang and Y. Li, ``Collaborative Perception in Autonomous Driving: Methods, Datasets, and Challenges," in \emph{IEEE Intell. Transp. Syst. Mag.}, vol. 15, no. 6, pp. 131-151, Nov. 2023.
\bibitem{CP-survey-MNET1}
Q. Yang, S. Fu, H. Wang and H. Fang, ``Machine-Learning-Enabled Cooperative Perception for Connected Autonomous Vehicles: Challenges and Opportunities," in \emph{IEEE Netw.}, vol. 35, no. 3, pp. 96-101, May 2021.
\bibitem{CP-survey-MVT}
Y. Kang, Q. Song, J. Song, F. Pan, L. Guo and A. Jamalipour, ``How Does a Digital Twin Network Work Well for Connected and Automated Vehicles: Joint Perception, Planning, and Control," in \emph{IEEE Veh. Technol. Mag.}, vol. 18, no. 4, pp. 45-55, Dec. 2023.
\bibitem{CP-survey-MNET2}
Z. Xiao, J. Shu, H. Jiang, G. Min, H. Chen and Z. Han, ``Overcoming Occlusions: Perception Task-Oriented Information Sharing in Connected and Autonomous Vehicles," in \emph{IEEE Netw.}, vol. 37, no. 4, pp. 224-229, Jul. 2023.
\bibitem{NR-V2X}
H. Bagheri \emph{et al.}, ``5G NR-V2X: Toward Connected and Cooperative Autonomous Driving," in \emph{IEEE Commun. Standards Mag.}, vol. 5, no. 1, pp. 48-54, Mar. 2021.
\bibitem{edge-sec}
R. Yu, D. Yang and H. Zhang, ``Edge-Assisted Collaborative Perception in Autonomous Driving: A Reflection on Communication Design," in \emph{Proc. IEEE/ACM Symp. on Edge Comput. (SEC)}, San Jose, CA, USA, Dec. 2021.
\bibitem{edgecooper}
G. Luo \emph{et al.}, ``EdgeCooper: Network-Aware Cooperative LiDAR Perception for Enhanced Vehicular Awareness," in \emph{IEEE J. Sel. Areas in Commun.}, vol. 42, no. 1, pp. 207-222, Jan. 2024.
\bibitem{avp}
Y. Jia \emph{et al.}, ``Infrastructure-Assisted Collaborative Perception in Automated Valet Parking: A Safety Perspective," in \emph{Proc. IEEE 99rd Veh. Technol. Conf. (VTC2024-Spring)}, Singapore, Jun. 2024.
\bibitem{lehan}
L. Wang, J. Sun, Y. Sun, S. Zhou and Z. Niu, ``Age of Information Guaranteed Scheduling for Asynchronous Status Updates in Collaborative Perception," in \emph{Proc. 21st Int. Symp. Model. and Optim. in Mobile, Ad Hoc, and Wireless Netw. (WiOpt)}, Singapore, Aug. 2023.
\bibitem{hybrid-level}
E. Arnold, M. Dianati, R. de Temple and S. Fallah, ``Cooperative Perception for 3D Object Detection in Driving Scenarios Using Infrastructure Sensors,” \emph{IEEE Trans. Intell. Transport. Syst.}, vol. 23, no. 3, pp. 1852-1864, Mar. 2022.
\bibitem{cpm}
ETSI, ``Intelligent Transport Systems (ITS); Vehicular Communications; Basic Set of Applications; Analysis of the Collective Perception Service,” ETSI TR 103 562, V2.1.1, 2019.
\bibitem{autocast}
H. Qiu, P. Huang, N. Asavisanu, X. Liu, K. Psounis, R. Govindan, ``Autocast: Scalable infrastructure-less cooperative perception for distributed collaborative driving,” in \emph{Proc. ACM Int. Conf. Mobile Syst., Appl., and Services (MobiSys)}, Portland, USA, Jun. 2022.
\bibitem{v2vnet}
T.-H. Wang, S. Manivasagam, M. Liang, B. Yang, W. Zeng, and R. Urtasun, “V2VNet: Vehicle-to-vehicle communication for joint perception and prediction,” in {\em Proc. Eur. Conf. Comput. Vision (ECCV)}, Glasgow, United Kingdom, Aug. 2020.
\bibitem{where2comm}
Y. Hu, S. Fang, Z. Lei, Y. Zhong, and S. Chen, “Where2comm: Communication-efficient collaborative perception via spatial confidence maps,” in \emph{Proc. Neural Inf. Process. Syst. (NeurIPS)}, New Orleans, USA, Nov. 2022.
\bibitem{xuemin}
K. Qu, W. Zhuang, Q. Ye, W. Wu and X. Shen, ``Model-Assisted Learning for Adaptive Cooperative Perception of Connected Autonomous Vehicles," in \emph{IEEE Trans. Wireless Commun.}, Early Access.
\bibitem{spectrum}
5GAA Automotive Association, “Position paper on deployment band configuration for C-V2X at 5.9 GHz in Europe”, [Online]. Available: \url{https://5gaa.org/content/uploads/2021/06/5GAA_S-210019_Position-paper-on-European-deployment-band-configuration-for-C-V2X_final.pdf}
\bibitem{ours}
S. Zhou, Y. Jia, R. Mao, Z. Nan, Y. Sun, and Z. Niu, ``Task-Oriented Wireless Communications for Collaborative Perception in Intelligent Unmanned Systems," in \emph{IEEE Netw.}, Early Access.
\bibitem{meet}
Y. Sun, B. Xie, S. Zhou and Z. Niu, ``MEET: Mobility-Enhanced Edge inTelligence for Smart and Green 6G Networks," in \emph{IEEE Commun. Mag.}, vol. 61, no. 1, pp. 64-70, Jan. 2023.
\bibitem{selective-comm}
H. Chiu, S. F. Smith, ``Selective Communication for Cooperative Perception in End-to-End Autonomous Driving," in \emph{Proc. Scalable Auton. Driving Workshop of IEEE Int. Conf. Robot. Automat. (ICRA Workshop)}, London, United Kingdom, May 2023.
\bibitem{syncnet}
Z. Lei, S. Ren, Y. Hu, W. Zhang, S. Chen, ``Latency-Aware Collaborative Perception," in {\em Proc. Eur. Conf. Comput. Vis. (ECCV)}, Tel Aviv, Israel, Oct. 2022.
\bibitem{coalign}
Y. Lu \emph{et al.}, ``Robust Collaborative 3D Object Detection in Presence of Pose Errors," in \emph{IEEE Int. Conf. Robot. Automat. (ICRA)}, London, United Kingdom, May 2023.
\bibitem{association}
A. Rauch, S. Maier, F. Klanner and K. Dietmayer, “Inter-vehicle object association for cooperative perception systems,” in \emph{Proc. IEEE Int. Conf. Intell. Transp. Syst. (ITSC)}, The Hague, Netherlands, May 2013.
\bibitem{cobevt}
R. Xu, Z. Tu, H. Xiang, W. Shao, B. Zhou, and J. Ma, ``CoBEVT: Cooperative bird’s eye view semantic segmentation with sparse transformers,” in \emph{Proc. Ann. Conf. Robot Learn. (CoRL)}, Auckland, NZ, Dec. 2022.
\bibitem{pp}
A. H. Lang, S. Vora, H. Caesar, L. Zhou, J. Yang, and O. Beijbom, “Pointpillars: Fast encoders for object detection from point clouds,” in \emph{Proc. IEEE Conf. Comput. Vis. Pattern Recognit. (CVPR)}, Long Beach, USA, Jun. 2019.
\bibitem{caddn}
C. Reading, A. Harakeh, J. Chae, and S. L. Waslander, “Categorical depth distribution network for monocular 3d object detection,” in \emph{Proc. IEEE Conf. Comput. Vis. Pattern Recognit. (CVPR)}, Virtual, Jun. 2021.
\bibitem{random-greedy}
A. Hashemi, M. Ghasemi, H. Vikalo and U. Topcu, ``Randomized Greedy Sensor Selection: Leveraging Weak Submodularity," in \emph{IEEE Trans. Autom. Control}, vol. 66, no. 1, pp. 199-212, Jan. 2021.
\bibitem{timely-coverage}
J. A. Rahal, G. de Veciana, T. Shimizu and H. Lu, ``Optimizing Timely Coverage in Communication Constrained Collaborative Sensing Systems," in \emph{IEEE Trans. Control Netw. Syst.}, Early Access.
\bibitem{david}
M. D. Thao Chan, Z. Nan, Y. Jia, S. Zhou, and Z. Niu, ``RSU-Aided Energy-Efficient Collaborative Perception for Connected Autonomous Vehicles," in \emph{Proc. IEEE Wireless Commun. Netw. Conf.}, Apr. 2024.
\bibitem{rao}
Q. Zhang, X. Zhang, R. Zhu, F. Bai, M. Naserian, and Z. M. Mao, ``Robust Real-time Multi-vehicle Collaboration on Asynchronous Sensors," in \emph{Proc. 29th Ann. Int. Conf. Mobile Comput. and Netw. (MobiCom)}, New York, NY, USA, Oct. 2023.
\bibitem{approx-alg}
D. S. Hochbaum, Ed., \emph{Approximation Algorithms for NP-Hard Problems.}
Boston, MA, USA: PWS Publishing Company, 1997.
\bibitem{icc}
Y. Jia, R. Mao, Y. Sun, S. Zhou and Z. Niu, “Online V2X Scheduling for Raw-Level Cooperative Perception,” in \emph{Proc. IEEE Int. Conf. Commun. (ICC)}, Seoul, South Korea, May 2022.
\bibitem{offload}
Y. Sun \emph{et al.}, “Adaptive learning-based task offloading for vehicular edge computing systems,” \emph{IEEE Trans. Veh. Technol.}, vol. 68, no. 4, pp. 3061-3074, Apr. 2019.
\bibitem{replication}
Y. Sun, S. Zhou, and Z. Niu, “Distributed task replication for vehicular edge computing: Performance analysis and learning-based algorithm,” \emph{IEEE Trans. Wireless Commun.}, vol. 20, no. 2, pp. 1138-1151, Feb. 2020. 
\bibitem{when2com}
Y. Liu, J. Tian, N. Glaser, and Z. Kira, “When2com: Multi-agent perception via communication graph grouping,” in \emph{Proc. IEEE Conf. Comput. Vis. Pattern Recognit. (CVPR)}, Virtual, Jun. 2020.
\bibitem{select2col}
Y. Liu \emph{et al.}, ``Select2Col: Leveraging Spatial-Temporal Importance of Semantic Information for Efficient Collaborative Perception," in \emph{IEEE Trans. Veh. Technol.}, Early Access.
\bibitem{opv2v}
R. Xu, H. Xiang, X. Xia, X. Han, J. Liu, and J. Ma, ``Opv2v: An open benchmark dataset and fusion pipeline for perception with vehicle-to-vehicle communication," in \emph{Proc. IEEE Int. Conf. Robot. Autom. (ICRA)}, May 2022.
\bibitem{v2v4real}
R. Xu \emph{et al.}, ``V2V4Real: A Real-world Large-scale Dataset for Vehicle-to-Vehicle Cooperative Perception," in \emph{The IEEE/CVF Comput. Vis. Pattern Recognit. (CVPR)}, Vancouver, Canada, Jun. 2023.
\bibitem{dolphins}
R. Mao, J. Guo, Y. Jia, Y. Sun, S. Zhou, and Z. Niu, ``DOLPHINS: Dataset for Collaborative Perception enabled Harmonious and Interconnected Self-driving,” in {\em Proc. Asian Conf. Comput. Vis. (ACCV)}, Macau SAR, China, Dec. 2022.
\bibitem{mass}
Y. Jia, R. Mao, Y. Sun, S. Zhou and Z. Niu, ``MASS: Mobility-Aware Sensor Scheduling of Cooperative Perception for Connected Automated Driving," in \emph{IEEE Trans. Veh. Technol.}, vol. 72, no. 11, pp. 14962-14977, Nov. 2023.
\bibitem{bmc}
S. Khuller, A. Moss, J. Naor, ``The budgeted maximum coverage problem," in \emph{Inf. Process. Lett.}, vol. 70, no. 1, pp. 39-45, Apr. 1999.
\bibitem{submodular}
A. Das and D. Kempe, ``Submodular meets spectral: Greedy algorithms for
subset selection, sparse approximation and dictionary selection,” in {\em Proc. Int. Conf. Mach. Learn. (ICML)}, Bellevue, WA, USA, Jun. 2011.
\bibitem{non-submodular}
A. A. Bian \emph{et al.}, “Guarantees for greedy maximization of nonsubmodular functions with applications,” in \emph{Proc. Int. Conf. Mach. Learning (ICML)}, Sydney, Australia, Aug. 2017.
\bibitem{sumo}
P. A. Lopez \emph{et al.}, “Microscopic Traffic Simulation using SUMO,” in \emph{Proc. IEEE Intell. Transport. Syst. Conf. (ITSC)}, Maui, USA, Nov. 2018.
\bibitem{3gpp37885}
3GPP, “Study on evaluation methodology of new Vehicle-to-Everything use cases for LTE and NR,” 3GPP TR 37.885, v15.1.0, Sept. 2018.

\end{thebibliography}
\end{document}